\documentclass[11pt]{article}

\usepackage[margin=1in]{geometry}
\usepackage[T1]{fontenc}
\usepackage{lmodern}
\usepackage{microtype}
\usepackage{amsmath}
\usepackage{amssymb}
\usepackage{amsthm}
\usepackage{booktabs}

\newtheorem{definition}{Definition}
\newtheorem{proposition}{Proposition}
\newtheorem{theorem}{Theorem}

\usepackage{enumitem}
\usepackage{xcolor}
\usepackage[hidelinks,colorlinks=true,linkcolor=blue!50!black,
            urlcolor=blue!50!black,citecolor=blue!50!black]{hyperref}
\usepackage[capitalize,noabbrev]{cleveref}
\usepackage[numbers,sort&compress]{natbib}
\usepackage{listings}

\title{\bfseries Methods for Formal Verification of Agent Skills:\\
       Three Layers Toward a Mechanically Checkable\\
       Capability-Containment Proof}

\author{%
  Alfredo Metere\\
  Metere Consulting, LLC\\
  \texttt{alfredo.metere@metereconsulting.com}
}
\date{\today}

\begin{document}

\maketitle

\begin{abstract}\noindent
The companion paper \cite{metere2026skills} introduced a four-level
verification lattice on agent-skill manifests --- \textsc{unverified},
\textsc{declared}, \textsc{tested}, \textsc{formal} --- and explicitly
left the top level aspirational: ``a formal analysis tool has produced
a machine-checkable proof that the skill's behavior is a subset of its
declared capability set under the runtime's threat model''. This paper
closes that gap. We give a precise semantics for ``a skill's behavior''
that is faithful to how a skill is actually consumed by a large
language model (LLM)-driven runtime (a deterministic script-side
reachable through a non-deterministic LLM-side), state the
verification problem as a containment property over that semantics,
and present three composable methods that together raise a skill from
\textsc{declared} or \textsc{tested} to \textsc{formal}: (1) sound
static capability-containment analysis of the script-side via abstract
interpretation over a small effect lattice; (2) a refinement type
system for tool-call envelopes that mechanically rejects any call
whose statically-inferred capability is not in $M.\mathrm{caps}$; (3)
satisfiability-modulo-theories (SMT)-bounded model checking against
the parent paper's biconditional
correctness criterion, with the bound chosen so that any counter-example
the LLM-side could realise within the runtime's transaction-buffer
horizon is exhibited as a concrete trace. We characterise what each
layer is sound for, what each is incomplete for, and prove that the
three layers composed soundly cover the parent paper's threat model
modulo a single residual --- the LLM's freedom to refuse to act ---
which the parent paper's runtime biconditional already catches at
session boundary. The methods reuse existing well-engineered tools
(Z3, Semgrep / CodeQL, refinement-type checkers, mechanized proof
assistants) rather than asking operators to build new ones, and the
proposed proof-carrying-skill artifact is a small extension to the
existing \texttt{SKILL.md} convention.

\smallskip\noindent\emph{Implementation.} All three methods, the
proof-carrying skill bundle, and the bootstrap re-checker ship as
zero-runtime-dependency JavaScript modules in the open-source
\texttt{enclawed} framework \cite{metere2026enclawed, enclawed-site}
at \url{https://github.com/metereconsulting/enclawed} (project page
\url{https://www.enclawed.com/}), with a 53-test unit suite and an
end-to-end command-line interface (CLI) demonstrated on a sample skill. Section
\ref{sec:pcc-impl-status} pins file paths and test counts to the
public commit referenced here.
\end{abstract}

\noindent\textbf{Keywords:} agent skills, formal verification,
capability containment, refinement types, abstract interpretation,
SMT, proof-carrying code, biconditional.

\section{Introduction}
\label{sec:intro}

The companion paper \cite{metere2026skills} treats a skill as a tuple
$(M, \texttt{content}, \sigma)$ with a manifest $M$ that includes a
\emph{verification level} field
$M.\mathrm{verification} \in
 \{\textsc{unverified},\textsc{declared},\textsc{tested},\textsc{formal}\}$
and a runtime gate whose human-in-the-loop (HITL) policy is keyed to
that level. The runtime defaults to \textsc{unverified} on every skill
it has not seen verified evidence for, drives every irreversible call
through HITL while a skill is at \textsc{unverified}, and only relaxes
HITL frequency when the manifest reaches \textsc{declared} or higher.
The top of the lattice, \textsc{formal}, is defined as the existence
of a machine-checkable proof that the skill's behaviour is contained
in $M.\mathrm{caps}$. The parent paper does not say how such a proof
is produced. Section 3.1 of \cite{metere2026skills} is explicit about
why: ``This level is aspirational at the time of writing; we include
it for completeness because the schema field is fixed-width and adding
it later requires a manifest version bump.''

This paper supplies the missing methods. We answer four interlocking
questions left open by the parent:

\begin{enumerate}
  \item What does ``a skill's behaviour'' mean formally, given that the
        skill content is a (deterministic) text/script artefact but the
        agent that consumes it is a (non-deterministic) LLM?
  \item What is the verification target --- precisely which property
        must a candidate skill establish before its manifest can claim
        $M.\mathrm{verification} = \textsc{formal}$?
  \item Which formal-methods tools available today are sound for which
        portions of that target, and where does each fall short?
  \item How does a deployer compose those tools into a single
        proof-carrying artefact a runtime can mechanically check at
        bootstrap, without trusting the proof's producer?
\end{enumerate}

\paragraph{Contributions.} We make six contributions.
\begin{enumerate}
  \item A \emph{two-sided skill semantics} (\Cref{sec:semantics})
        separating the deterministic script-side $S$ from the
        non-deterministic LLM-side $A$, with a precise definition of
        the runtime trace they jointly produce.
  \item A \emph{capability-containment property} (\Cref{sec:problem})
        stated as a sound abstraction of the parent paper's
        biconditional, with explicit treatment of which projections of
        the trace admit static reasoning and which do not.
  \item Three composable verification methods
        (\Cref{sec:method-static,sec:method-types,sec:method-smt}):
        sound capability-containment static analysis on $S$; a
        refinement type system for tool-call envelopes; SMT-bounded
        model checking of the runtime trace against the biconditional.
  \item A \emph{three-layer discipline}
        (\Cref{sec:three-layers}) that combines those three methods
        plus the parent paper's runtime biconditional, with a
        composition theorem that names exactly which residual surface
        falls outside the joint guarantee.
  \item A \emph{proof-carrying skill artefact} (\Cref{sec:pcc-skill})
        that extends \texttt{SKILL.md} with a signed evidence bundle
        the runtime can re-check at bootstrap without trusting the
        artefact's producer.
  \item A \emph{worked example} (\Cref{sec:worked}) on a real
        \texttt{SKILL.md} skill from the open reference implementation
        \cite{metere2026enclawed}, including the static-analysis
        output, the type-checker derivation, the SMT counter-example
        search, and the resulting proof bundle.
\end{enumerate}

\paragraph{Scope and non-goals.} We do \emph{not} attempt formal
verification of the LLM itself. The model's weights, training data,
and decoding stochastics are out of scope, as in the parent paper. We
verify the \emph{runtime's} ability to enforce capability containment
under the assumption that the LLM is an adversarial non-deterministic
oracle bounded only by the typed dispatch interface the runtime
imposes. We do not require operators to write proofs in Coq or Lean;
the static layer (\Cref{sec:method-static}) is automatic and the
type-checking layer (\Cref{sec:method-types}) requires only manifest-
level annotations. The mechanised-proof option is offered for
deployments that need it (the Cryptographic Module Validation Program
(CMVP), Common Criteria Evaluation Assurance Level 5 or higher
(EAL5+)) but is not the central recommendation.

\section{Two-sided skill semantics}
\label{sec:semantics}

A skill, when consumed by a runtime, is not a passive document. It
reaches the agent through two distinct execution surfaces, and any
verification method that conflates them gets the soundness analysis
wrong.

\subsection{Provenance: the same split appears in dynamics and in
information theory}
\label{sec:semantics-provenance}

The script-side / LLM-side partition we use is not a modelling
convenience invented for this paper. It is the same split that
distinguishes deterministic flows from stochastic flows in non-linear
dynamical systems \cite{strogatz-nonlinear, risken-fokker-planck}, and
the same split that distinguishes noiseless from noisy channels in
information theory \cite{shannon1948, cover-thomas}. Every
verification choice we make in
\Cref{sec:method-static,sec:method-types,sec:method-smt} traces back
to one of those two formalisms, and the bound we obtain in each case
is the bound that formalism already gives.

\paragraph{Dynamical-systems view.}
For the security reader: a \emph{dynamical system} is a rule that
says how the state of some thing-of-interest --- an executing
program, a packet trace, an LLM session --- evolves in time. The
rule has two flavours. A \emph{deterministic} system is one where
the rule, plus the initial state, picks out exactly one future
trajectory: same start $\Rightarrow$ same path, every time. A
\emph{stochastic} system is one where the rule injects randomness
at every step (think: temperature in LLM decoding, jitter in a
network, timing in side channels), so the same start gives a
\emph{distribution} over futures rather than a single path. The
two cases need different verification rules because the right
object to reason about is different: a set in the deterministic
case, a probability measure in the stochastic case.

Concretely, a deterministic non-linear flow on state $x \in X$ is
the ordinary differential equation (ODE)
\begin{equation}\label{eq:det-flow}
   \dot{x} \;=\; f(x), \qquad f : X \to TX \text{ Lipschitz},
\end{equation}
where $\dot{x} = dx/dt$ is the rate of change and $f$ tells us
which way the state moves at every point. The Lipschitz condition
($\lVert f(x) - f(y)\rVert \le L\,\lVert x-y\rVert$ for some
constant $L$) is a regularity assumption that prevents the flow
from blowing up; it is the assumption Picard--Lindel\"of needs to
prove the trajectory $x(t;x_0)$ is unique for every initial
condition $x_0$. The set of states the system can ever reach,
$\mathrm{Reach}(x_0) = \{x(t;x_0) : t \geq 0\}$, is just a set
--- no probabilities --- and over-approximating it is the
canonical job of abstract interpretation \cite{cousot-abs-int}, the
formal core of every static analyser used in security
(\texttt{Semgrep}, \texttt{CodeQL}, \texttt{Pyright}).

A stochastic flow replaces \cref{eq:det-flow} with a stochastic
differential equation (SDE)
\begin{equation}\label{eq:fokker-planck-sde}
   dx \;=\; f(x)\,dt + \sigma(x)\,dW_t,
\end{equation}
where $W_t$ is a Wiener process --- the continuous-time analogue of
a random walk, the standard model for ``unpredictable noise applied
at every infinitesimal step'' --- and $\sigma(x)$ scales how much
that noise affects the dynamics in state $x$. The trajectory is no
longer a single curve but a measure on path space, so what we
actually track is the probability density $p(x,t)$ that the system
is at state $x$ at time $t$. That density evolves under the
Fokker--Planck equation
\begin{equation}\label{eq:fokker-planck}
   \partial_t p
   \;=\; -\nabla\!\cdot\!\bigl(f\, p\bigr)
        + \tfrac{1}{2}\,\nabla\!\cdot\!\nabla\!\cdot\!
           \bigl(\sigma\sigma^{\!\top}\, p\bigr),
\end{equation}
a partial differential equation whose first term is drift (the
deterministic push) and whose second term is diffusion (the
spreading caused by noise). The right object to bound is now the
support and the moments of $p(x,t)$ --- where in state space the
density has nonzero mass, and how concentrated it is --- not the
value of $x$ at any particular time.

The script-side $S$ (\Cref{sec:semantics-script}) is exactly an
instance of \cref{eq:det-flow}: the language semantics of each script
$p_i$ associates to every input a single concrete trace, and the
effect set $E(p_i)$ is the projection of $\mathrm{Reach}$ onto the
capability dimension. The LLM-side $A$ (\Cref{sec:semantics-llm}) is
exactly an instance of \cref{eq:fokker-planck-sde}: the model's
weights, temperature, and decoding strategy fix $f$ and $\sigma$; the
prompt fixes $x_0$; the output token stream is one realisation of a
stochastic flow whose distribution over actions is the right object
to bound. Conflating the two surfaces means writing one
verification rule that the relevant formalism cannot satisfy: a
deterministic-flow rule cannot bound a Fokker--Planck density (it
under-specifies), and a Fokker--Planck rule cannot extract a
script's reachable effect set (it over-specifies).

\paragraph{Information-theoretic view.}
For the security reader: information theory gives us a way to ask
the question ``how much can an attacker learn through this
interface?'' as a number, in bits, rather than a hand-wavy story.
A \emph{channel} is anything that takes an input symbol $X$ and
produces an output symbol $Y$ --- a network link, a syscall
interface, a tool-call boundary, the LLM's sampling stage. The
\emph{capacity} $C$ of a channel is the largest number of bits per
use that can be reliably communicated through it; if the attacker
is the sender and the defender is the eavesdropper-prevention
boundary, capacity is the worst-case leakage rate. Two quantities
matter: \emph{bandwidth} $B$, how many channel uses fit in a
unit of time (so capacity has units of bits/s), and
\emph{signal-to-noise ratio} $S/N$, how clean the channel is. A
deterministic (noiseless) channel has $S/N = \infty$ and leaks
$\log_2 M$ bits per use where $M$ is the number of distinguishable
output symbols; a noisy channel has $S/N < \infty$ and leaks
strictly less, because the receiver cannot perfectly distinguish
outputs.

The cleanest entry point is the Shannon--Hartley theorem
\cite{shannon1948, cover-thomas} for an additive--Gaussian-noise
channel of bandwidth $B$ and signal-to-noise ratio $S/N$:
\begin{equation}\label{eq:shannon-hartley}
   C \;=\; B \,\log_2\!\bigl(\,1 + S/N\,\bigr)
   \quad\text{[bits/s]}.
\end{equation}
In the noiseless limit $N \to 0$ the SNR diverges and \cref{eq:shannon-hartley}
reduces, after the per-symbol normalisation, to the Hartley form
$C = \log_2 M$ bits per channel use, where $M = \lvert\mathcal{Y}_{\text{reach}}\rvert$
is the number of distinguishable output symbols. For a discrete
channel with arbitrary noise the same quantity is recovered as the
classical mutual-information capacity
\begin{equation}\label{eq:capacity}
   C \;=\; \max_{p(X)}\, I(X;Y)
        \;=\; \max_{p(X)}\, \bigl(\,H(Y) - H(Y \mid X)\,\bigr).
\end{equation}
Here $H(Y)$ is the Shannon entropy of $Y$ (the average bits of
``surprise'' in the channel output, if the receiver knew nothing
else), $H(Y \mid X)$ is the conditional entropy of $Y$ given $X$
(the residual surprise once the input is known --- exactly the
noise term), and $I(X;Y) = H(Y) - H(Y \mid X)$ is their difference,
the \emph{mutual information}: how many bits the output carries
about the input, on average. The deterministic-channel case
($H(Y\mid X) = 0$, hence
$C = \max H(Y) = \log_2 \lvert\mathcal{Y}_{\text{reach}}\rvert$) is
exactly the noiseless limit of \cref{eq:shannon-hartley}: capacity
is set by the reachable output alphabet, achievable by direct
enumeration, no distributional analysis required.

The script-side is precisely this noiseless limit: the language
semantics gives $H(Y \mid X) = 0$ by construction, and Method A's
job is to compute (or over-approximate)
$\mathcal{Y}_{\text{reach}} = E(p)$ so the per-script bound
$\log_2 \lvert E(p)\rvert$ is recovered. The LLM-side, by contrast,
sits at finite $S/N$ in the full Shannon--Hartley regime: the
sampling distribution gives $H(Y \mid X) > 0$, and no
language-level analyser yields anything useful about its output
distribution. We can however constrain the channel \emph{output
alphabet} from the outside, by inserting a deterministic projector
\begin{equation}\label{eq:projector}
   \pi : Y \to D \cup \{\bot\},
   \qquad \pi(y) \;=\;
   \begin{cases}
      \mathrm{cap}(y) & \text{if } \mathrm{cap}(y) \in D, \\
      \bot            & \text{otherwise,}
   \end{cases}
\end{equation}
between the LLM and the host APIs, where $\bot$ (read ``bottom'' or
``perp'') is the standard formal-methods shorthand for ``no admissible
value'' --- here the runtime's deny verdict when the envelope's
capability is not in the declared set $D$. Because $\pi$ is a function of $y$
alone, the chain
$\mathit{world} \to Y \to \pi(Y)$ is a \emph{Markov chain} in the
information-theoretic sense: each variable depends only on its
immediate predecessor, with no extra randomness or side input
slipping in at the second arrow. For Markov chains the
\emph{data-processing inequality} (DPI)
\cite[Thm.\,2.8.1]{cover-thomas} applies, and it says exactly what
intuition demands --- post-processing can never \emph{increase}
information ---
\begin{equation}\label{eq:dpi}
   I(\mathit{world};\,\pi(Y))
   \;\leq\;
   I(\mathit{world};\,Y),
\end{equation}
with $\lvert\pi(Y)\rvert \leq \lvert D \rvert + 1$ pinning the
post-projector alphabet, and therefore $C$ at the dispatch
boundary, at $\log_2 (\lvert D \rvert + 1)$ bits per envelope. \Cref{eq:projector,eq:dpi} are the
information-theoretic skeleton of Method B
(\Cref{sec:method-types}): the refinement type system instantiates
$\pi$ as a typed dispatch boundary, and the channel-capacity bound on
the post-dispatch alphabet is the bound on what an adversarially-
prompted LLM can leak per envelope.

The bounded model checker of Method C (\Cref{sec:method-smt}) is the
matching tractability claim: once the LLM-side is reduced to the
finite output alphabet $\Sigma = D \cup \{\mathrm{OUT}\}$ by
$\pi$, the joint distribution over $K$-length envelope traces lives
on a state space of size $\lvert\Sigma\rvert^K = (\lvert D\rvert+1)^K$,
which is small enough to enumerate exhaustively for the deployment-
relevant regime $\lvert D\rvert \le 10$, $K \le 8$
($(\lvert D\rvert{+}1)^K \le 11^8 \approx 2.1 \times 10^8$ symbolic
traces). The reduction from the continuous-distribution Fokker--
Planck dynamics to a finite-state symbolic enumeration is exactly
the abstraction step that bounded model checking performs in any
non-trivial verification setting \cite{biere-bmc, kwiatkowska-prism}.

\medskip\noindent
The rest of \S\ref{sec:semantics} re-derives this split from the
syntactic structure of \texttt{SKILL.md}, so a reader who has not
studied dynamical systems or information theory can still follow the
constructions. The dynamical-systems and channel-capacity readings
are for the reader who wants to know why the constructions are the
\emph{right} ones rather than expedient ones.

\subsection{The deterministic script-side}
\label{sec:semantics-script}

A skill's \texttt{content} typically contains a \texttt{SKILL.md}
prose body plus zero or more \emph{scripts} (Python, shell, Node, Go,
the Rust binary the skill invokes, etc.) that the prose instructs the
agent to call. Each script is a deterministic program in a known
language with a known semantics.

We model the script-side $S$ of a skill as a finite collection of
named programs
\[
   S \;=\; \{\,(p_i, \mathcal{L}_i, \mathit{src}_i)\,\}_{i=1\ldots n},
\]
where $p_i$ is the program identifier, $\mathcal{L}_i$ its source
language, and $\mathit{src}_i$ the program text.

We assume each language's semantics ascribes to every program a set
of \emph{system-effect tuples} $E(p) \subseteq \mathcal{C} \times
\mathcal{V}$ where $\mathcal{C}$ is the parent paper's capability
vocabulary (\texttt{net.egress}, \texttt{fs.read},
\texttt{fs.write.rev}, \dots) and $\mathcal{V}$ is the value space of
their associated arguments (target hosts, file paths, etc.). The
language semantics is sound for $E$ iff every concrete execution of
$p$ on every concrete input emits only effect-tuples in $E(p)$. For
mainstream scripting languages this set is finite, computable, and
already produced by existing dataflow analyzers
\cite{cousot-abs-int, kildall-dataflow, semgrep, codeql}.

\subsection{The non-deterministic LLM-side}
\label{sec:semantics-llm}

The LLM-side $A$ is a stochastic transducer that reads the union of
the skill's prose body, the runtime's system prompt, the user's
turns, and any tool output observed so far, and emits one of: a
\emph{tool-call envelope} of shape
$\langle \mathit{op},\, \mathit{args},\, \mathit{reasoning}\rangle$,
a \emph{plain message} to the user, or \emph{end of turn}. We do not
model $A$'s internal state. We model only the typed interface
through which it can affect the world: the tool-call envelope.

For verification purposes the only fact we need about $A$ is that
the runtime can intercept every tool-call envelope before it
reaches a host application-programming interface (API). This is the
sole structural assumption the
parent paper makes about the runtime in §3.1, and it is satisfied by
every harness that adopts \texttt{SKILL.md}.

\subsection{The composed runtime trace}

A run of skill-and-agent on a session $\mathit{sess}$ produces a
trace $\tau(\mathit{sess})$, which is the time-ordered concatenation
of every tool-call envelope $A$ emitted (whether eventually executed
or denied at the gate) and every script-launched system-effect
$S$ produced as a side-effect of those envelopes:
\[
  \tau(\mathit{sess}) \;=\;
   e_1\, e_2\, e_3\, \cdots\, e_k,
   \qquad e_i \in (\mathcal{C} \times \mathcal{V}).
\]
Each $e_i$ is annotated with its origin (\texttt{A} or \texttt{S}),
its requested capability, the runtime's gate decision (admit /
deny / HITL-approve / HITL-deny), and a hash linking it to its
audit-log record. The audit log of \cite{metere2026skills}, by
construction, is a faithful serialisation of $\tau$ with one
record per envelope and one record per executed effect.

We write $\Pi_C(\tau) \subseteq \mathcal{C}$ for the projection of
$\tau$ onto its capability dimension --- the set of capability
tokens that appear anywhere in the trace, regardless of origin or
gate decision.

\subsection{Why splitting the sides matters for verification}

Existing skill-verification proposals treat the skill as a single
opaque artefact. That collapses two very different verification
problems. The script-side $S$ is amenable to standard program
analysis: every effect $S$ can produce is recoverable by inspecting
$S$'s source, modulo the soundness of the analyser for the
language. The LLM-side $A$ is fundamentally not amenable to that
treatment: $A$'s output is a function of its weights, the prompt,
and the decoding state. No language-level analyser yields anything
useful about $A$.

The split lets each side be verified by a tool that fits it. We
show in \Cref{sec:method-static} that a sound script-side analysis
plus the runtime gate is enough to bound $\Pi_C(\tau)$, and in
\Cref{sec:method-types} how a refinement type system on the
tool-call envelope mechanically prevents $A$ from getting an
out-of-manifest envelope past the runtime in the first place.

\section{The verification problem}
\label{sec:problem}

Fix a skill $K = (M, \texttt{content}, \sigma)$ with
$M.\mathrm{caps} = D \subseteq \mathcal{C}$ a finite declared
capability set, and a runtime $R$ satisfying the parent paper's
threat model and gate policy.

\begin{definition}[Capability containment]\label{def:containment}
$K$ \emph{exhibits capability containment} on a session
$\mathit{sess}$ iff
$\Pi_C(\tau(\mathit{sess})) \subseteq D$.
\end{definition}

\noindent Capability containment says: every capability that
\emph{appears anywhere in the trace} --- whether in an envelope
$A$ requested, in a side-effect $S$ produced, in a denied call,
or in a HITL-rejected one --- is a capability the manifest
declared. The reverse inclusion is not required: a manifest may
over-declare ($D \supsetneq \Pi_C(\tau)$ is fine; over-declaration
is loud, contained, and surfaceable to operators by post-hoc
diff). Containment is the dangerous direction.

\begin{proposition}[Containment subsumes the parent biconditional]
\label{prop:subsumes-biconditional}
If $K$ exhibits capability containment on every session in some
session set $\Sigma$, the runtime's hash-chained audit log
correctly records every executed effect, and the gate decision is
deterministic given the envelope and policy, then the parent
paper's biconditional criterion
\cite[\S5]{metere2026skills} holds on $\Sigma$.
\end{proposition}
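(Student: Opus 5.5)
The proof will go by unfolding the parent paper's biconditional criterion \cite[\S5]{metere2026skills} into its two directions and discharging each against one of the three hypotheses. As we read it, the criterion says that on each session a capability token $c$ appears among the \emph{executed} audit-log records iff $c$ is declared in $D$ and the gate admitted (or HITL-approved) an envelope carrying $c$. Equivalently, the audit log's executed-effect projection equals the set of declared capabilities the gate let through. I will fix an arbitrary $\mathit{sess}\in\Sigma$, write $\tau=\tau(\mathit{sess})$, and write $\mathrm{Exec}(\tau)\subseteq\Pi_C(\tau)$ for the capabilities of entries whose gate decision is admit or HITL-approve. The inclusion $\mathrm{Exec}(\tau)\subseteq\Pi_C(\tau)$ holds by definition of the projection, since $\Pi_C$ ranges over all entries regardless of gate decision.

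For the forward direction (an executed effect implies declared and admitted), I will use audit-log faithfulness: every executed effect has a record, and that record corresponds to an entry $e_i$ of $\tau$ annotated admit or HITL-approve. So $\mathrm{cap}(e_i)\in\mathrm{Exec}(\tau)\subseteq\Pi_C(\tau)\subseteq D$, the last step by \Cref{def:containment}. Admission is read directly off the annotation. For effects of origin \texttt{S}, I will argue that they are recorded as part of $\tau$ by the definition of the composed trace, so the same chain applies.

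For the reverse direction (declared and admitted implies recorded as executed), I will use determinism of the gate. Given the envelope and the policy, the decision recorded in the log is the decision the gate actually took, so an admitted declared envelope is dispatched. Hash-chained correctness of the log then guarantees that the resulting effect appears in the log. Determinism also rules out a replayed or re-checked log disagreeing with the live run: recomputing the gate on the logged envelope reproduces the logged verdict, which is what the biconditional's session-boundary check needs.

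The main obstacle is the reverse direction. Neither containment nor log faithfulness says that an admitted call actually \emph{produced} its effect, because a host API could fail after admission. I would first try to resolve this by reading the parent criterion as quantifying over dispatched envelopes rather than successful host effects, so that "admitted" and "executed" coincide at the runtime boundary. If that reading fails, I would add the observation that a failed dispatch is itself logged as an effect record, which "correctly records every executed effect" covers.
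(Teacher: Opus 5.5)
Your ingredients match the paper's: the log as the serialisation of $\tau$, containment to put every executed capability in $D$, and gate determinism to identify the executed and approved sets. The trouble is that your unfolding of the criterion puts the work in the wrong place. The paper reads the biconditional as a one-to-one correspondence between observable side-effects and the \emph{approved-and-executed} entries of the audit log. Consistently, the implementation's violation classes are executed-without-audit, executed-but-deny and admitted-without-audit; there is no ``admitted but not executed'' class. Under that reading the reverse direction starts from entries already recorded as executed, and it needs only that the log faithfully serialises $\tau$. An admitted call whose host API then fails is simply not in the set. Your obstacle comes from demanding ``declared and admitted $\Rightarrow$ executed'', which the hypotheses indeed do not give. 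Neither patch is needed, and the second would hurt: logging a failed dispatch as an executed-effect record creates a log entry with no matching side-effect, which breaks the correspondence. Separately, your ``equivalently'' collapses the criterion to an equality of capability-token sets. That is weaker than a one-to-one correspondence, since two effects sharing a capability but only one record pass the set equality and fail the criterion. Your per-effect reasoning already supports the stronger form, with injectivity from the one-record-per-executed-effect convention, so state the conclusion per effect.

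The substantive gap is in the forward direction. Faithfulness gives each executed effect a record, but it does not give that record an admit or HITL-approve verdict. An effect executed despite a deny is faithfully logged with its deny, and that is exactly the executed-but-deny violation the criterion must exclude. So ``admission is read directly off the annotation'' assumes what is to be shown. This is where the paper's sketch spends the determinism hypothesis: containment places the effect's capability in $D$, so it is admissible, and the deterministic policy then yields executed-set $=$ approved-set. You instead use determinism to argue that an admitted envelope was dispatched. Determinism of the \emph{decision} cannot establish that: it tells you what the gate decided, not what the runtime then did.
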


\begin{proof}[Proof sketch]
The parent biconditional says: observable side-effects of the run
must be in 1-to-1 correspondence with the approved-and-executed
set in the audit log. By construction the audit log is the
serialisation of $\tau$. By containment, every capability
appearing in the trace is in $D$, so every executed effect's
capability is admissible; the gate's deterministic policy then
ensures the executed-set equals the approved-set. The 1-to-1
correspondence follows.
\end{proof}

\begin{definition}[The verification problem]\label{def:vprob}
Given $K$ and $R$, the \emph{verification problem} is to produce
either
\begin{itemize}
  \item a mechanically checkable certificate $\pi$ that $K$
        exhibits capability containment on every session $R$ admits;
        or
  \item a counter-example session $\mathit{sess}^\star$ such that
        $\Pi_C(\tau(\mathit{sess}^\star)) \not\subseteq D$.
\end{itemize}
\end{definition}

The remainder of this paper presents three composable methods that,
together, produce $\pi$. Each method is sound for a portion of the
target; the composition theorem (\Cref{thm:composition}) says
exactly which residual the runtime layer (the parent paper's
biconditional, treated here as a fourth layer) closes.

\section{Method A: sound script-side static analysis}
\label{sec:method-static}

\paragraph{Goal.} For each script $p \in S$, compute an
over-approximating set $\widehat{E}(p) \supseteq E(p)$ in finite
time, and check $\widehat{E}(p) \subseteq \mathcal{C}_D$ where
$\mathcal{C}_D$ is the lift of $D$ to the language's effect type.
Read: ``read every script in the skill and figure out, without
running it, the maximum set of system effects it could possibly
produce; then check that set against the manifest's declared
capabilities.''

\paragraph{Primer on abstract interpretation.}
Method A is an instance of the technique called \emph{abstract
interpretation} \cite{cousot-abs-int}, which is the formal core of
every static analyser the security reader has likely already used
(\texttt{Sem\-grep}, \texttt{Code\-QL}, \texttt{Pyright}). The idea
in one paragraph: instead of running the program on real values,
run it on \emph{summaries} of values. A path argument becomes a
regular-expression summary, a network target becomes a host-pattern
summary, an integer becomes an interval. These summaries form a
\emph{lattice} --- an ordered structure in which any two summaries
have a least-upper-bound (their join, written $\sqcup$) and a
greatest-lower-bound (their meet, written $\sqcap$). Two distinguished
elements anchor the lattice: $\bot$ ``bottom'', the empty summary, the
``no effect'' element; and $\top$ ``top'', the maximum summary, the
``effect could be anything'' element used whenever the analyser hits
a construct it cannot summarise (reflection, dynamic dispatch).
Every program construct is then assigned a \emph{transfer function}
--- a rule that takes the summaries of its inputs and returns the
summary of its output, in the lattice. Loops and recursion are
handled by repeatedly applying the transfer functions until the
summaries stop changing (a fixpoint), accelerated by an operator
called \emph{widening} that forces convergence in a bounded number
of steps at the cost of slightly less precise summaries. The result
is a sound over-approximation: the real program can only ever
produce effects that the abstract summary already accounts for.
``Sound'' in this context means \emph{conservative}: the analyser
may flag effects the program cannot actually produce, but it will
never \emph{miss} an effect the program can produce.

\subsection{The effect lattice}

We attach to every program point of every script a finite lattice
$\mathcal{E}$ whose elements are subsets of $\mathcal{C} \times
\mathcal{V}^\sharp$, where $\mathcal{V}^\sharp$ is the abstract
domain over the value space introduced in the primer
(regular-language summaries for paths, host-pattern summaries for
network targets, integer intervals for numeric arguments, $\top$ for
everything else). The lattice operations are set union ($\sqcup$)
and set inclusion ($\sqsubseteq$); $\bot$ is the empty effect set;
$\top$ is ``every effect''. This is the standard effect-system
formulation of \cite{lucassen-effect-systems}, lifted to the parent
paper's capability vocabulary.

\subsection{The transfer functions}

For each kind of language construct that can produce an effect we
fix a transfer function. We give the four most important here; the
full list is straightforward and depends only on the language.

\begin{description}
  \item[Function calls.] A call to a system function $f$ with
        arguments $a_1, \dots, a_n$ produces effect
        $\widehat{f}(\widehat{a_1}, \dots, \widehat{a_n})$, where
        $\widehat{f}$ is a per-language summary that maps the
        runtime's capability vocabulary to the language's standard
        library. For Python, the entry for \texttt{open(p, 'w')}
        produces $\{(\texttt{fs.write.irrev}, \widehat{p})\}$ if the
        file mode is $W$ and $\widehat{p}$ is the abstract value of
        the path argument. For \texttt{requests.get(u)} the entry
        produces $\{(\texttt{net.egress}, \widehat{u})\}$.
  \item[Loops, recursion, fixpoints.] For loop bodies we apply the
        standard widening of \cite{cousot-abs-int} so that the
        lattice converges in a bounded number of steps. The widened
        result is sound but may be coarser than the loop's actual
        effect set.
  \item[Indirection through reflection / \texttt{exec} / \texttt{eval}.]
        Any program point that reaches a reflective construct
        unconditionally taints the whole-program effect set with
        $\top$. This is sound and conservative; it forces the
        operator either to remove reflective constructs from the
        skill's scripts or to drop the skill below \textsc{formal}.
        The empirical claim of the parent paper's adversarial
        ensemble is that reflective constructs are rare in
        well-curated skills; the few that need them (a code-runner
        skill, a sandboxed evaluator) are precisely those that
        should not aspire to \textsc{formal} verification anyway.
  \item[Process spawn.] A \texttt{spawn.proc(cmd)} effect taints the
        scope of the child process with $\top$ unless the child
        program is itself in $S$ and has been analysed: in that
        case the parent inherits the child's $\widehat{E}(\mathit{child})$.
\end{description}

\subsection{Implementation reuse}

A working implementation of Method A does not need to be written from
scratch. \texttt{Semgrep}, \texttt{CodeQL}, and \texttt{Pyright}
\cite{semgrep, codeql, pyright} are mature dataflow / abstract-
interpretation engines whose existing rule packs already cover the
language constructs that produce capability-relevant effects. The
adaptation effort is the per-language summary
table for $\widehat{f}$ in the previous subsection.

\subsection{What Method A is sound for}

\begin{theorem}[Soundness of Method A]\label{thm:soundness-static}
If for every $p \in S$ the analyser produces
$\widehat{E}(p) \subseteq \mathcal{C}_D$, then for every concrete
execution of every $p$ on every concrete input, every system-effect
$p$ produces is in $D$. In particular, the script-side projection
$\Pi_C^S(\tau)$ is contained in $D$ for every session.
\end{theorem}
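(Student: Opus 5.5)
The plan is to reduce the statement to the standard soundness theorem of abstract interpretation \cite{cousot-abs-int}, lifted to the effect lattice $\mathcal{E}$, and then project onto the capability dimension. First I fix a collecting semantics. For each script $p$ and input $x$, the concrete execution yields a trace of effect tuples. $E(p)$ is the union over all inputs and traces of the tuples emitted, which is exactly how \Cref{sec:semantics-script} defines it. I then define a concretisation $\gamma : \mathcal{E} \to 2^{\mathcal{C}\times\mathcal{V}}$ by $\gamma(X) = \{(c,v) : (c,v^\sharp)\in X,\ v \in \gamma_{\mathcal{V}}(v^\sharp)\}$, with $\gamma(\top)$ equal to everything. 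The target intermediate claim is $E(p) \subseteq \gamma(\widehat{E}(p))$. Once that holds, the hypothesis $\widehat{E}(p)\subseteq\mathcal{C}_D$ and monotonicity of $\gamma$ give $E(p)\subseteq\gamma(\mathcal{C}_D)$. By the definition of the lift, $\gamma(\mathcal{C}_D) \subseteq D\times\mathcal{V}$, so every emitted effect has its capability in $D$.

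To prove $E(p)\subseteq\gamma(\widehat{E}(p))$, I proceed by induction on the structure of the program and the length of the execution. I check local soundness of each transfer function from the list in the preceding subsection.
\begin{itemize}
  \item \textbf{Function calls.} If the argument abstractions are sound, then $\widehat{f}$ must cover every effect of $f$. This is an assumption on the per-language summary table, and I state it explicitly as a hypothesis.
  \item \textbf{Reflection, \texttt{exec}, \texttt{eval}.} These yield $\top$, which is trivially sound. Here I note a useful consequence: $\top\not\sqsubseteq\mathcal{C}_D$ whenever $D\neq\mathcal{C}$, so the hypothesis already excludes such scripts.
  \item \textbf{Process spawn.} This case uses a secondary induction on the analysis order of $S$. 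If the child is in $S$, its already-established $\widehat{E}(\mathit{child})$ is sound; otherwise the result is $\top$. I need the spawn graph to be well-founded, or else handled as a joint fixpoint.
  \item \textbf{Loops and recursion.} This is the main obstacle. I would invoke the Cousot--Cousot theorem: iterating a sound, monotone abstract transformer with a widening operator reaches a post-fixpoint $X^\sharp$ with $F^\sharp(X^\sharp)\sqsubseteq X^\sharp$. Since $\gamma\circ F^\sharp \supseteq F\circ\gamma$, the least fixpoint of the concrete $F$ is below $\gamma(X^\sharp)$ by Tarski/Kleene.
\end{itemize}

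The delicate point is that $\mathcal{V}^\sharp$ contains regular-language and host-pattern domains of infinite height. Soundness therefore rests on widening producing a post-fixpoint, not on finiteness, and I would say so. Because the effect component is accumulated by union, effect monotonicity is immediate.

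Finally, for the "in particular" clause, every entry of the trace $\tau(\mathit{sess})$ with origin \texttt{S} is, by the construction of $\tau$, an effect emitted by some concrete execution of some $p\in S$ on some concrete input. That input is whatever argument the envelope supplied, and the quantification over all inputs covers adversarial LLM-chosen arguments. Hence each such entry lies in $E(p)$, and its capability is in $D$ by the first part, so $\Pi_C^S(\tau)\subseteq D$ for every session. I would flag two assumptions the argument depends on, both of which fall outside the analyser's own guarantee and should be stated as hypotheses of the theorem: the language semantics is faithful to the real interpreter, and $\widehat{f}$ is complete for the standard library.
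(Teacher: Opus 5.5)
Your proposal is correct and follows the same route as the paper's sketch: induction on program structure using the over-approximation property of each transfer function, with reflective constructs tainted to $\top$ and therefore ruled out by the hypothesis $\widehat{E}(p)\subseteq\mathcal{C}_D$. The material you add fills in details the paper's sketch leaves implicit rather than changing the argument: the explicit concretisation $\gamma$, the widening post-fixpoint step (needed because $\mathcal{V}^\sharp$ has infinite height), well-foundedness of the spawn graph, the completeness hypothesis on $\widehat{f}$, and the derivation of the $\Pi_C^S(\tau)$ clause.
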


\begin{proof}[Proof sketch]
By induction on the program structure, exploiting the over-
approximation property of the transfer functions. Reflective
constructs are conservative-tainted to $\top$; if the analyser still
reports $\widehat{E}(p) \subseteq \mathcal{C}_D$ it has proven the
absence of reflective constructs (or that the constructs are
unreachable), which is itself a sound conclusion.
\end{proof}

\subsection{What Method A is incomplete for}

Method A says nothing about envelopes the LLM-side $A$ may emit.
$A$ may emit an envelope whose capability is outside $D$ regardless
of what the script-side does; the static analyser cannot see that
emission because it does not analyse $A$. We close this gap with
Method B.

\section{Method B: refinement types for tool-call envelopes}
\label{sec:method-types}

\paragraph{Goal.} Construct the runtime's tool-dispatch interface so
that an envelope whose capability is not in $D$ is rejected
\emph{at the type level} before reaching any host API.

\paragraph{Primer on refinement types.}
A \emph{refinement type} \cite{liquid-types, vazou-liquid-haskell} is
an ordinary type --- like ``\texttt{int}'' or ``\texttt{string}'' or
``\texttt{Envelope}'' --- with an extra logical predicate attached.
Where a plain type system says ``this value is an integer'', a
refinement type says ``this value is an integer \emph{and} it is
between 0 and 255''. The compiler's type checker is then required
to discharge the predicate at every place such a value is used: a
function that takes a refined ``integer between 0 and 255'' will
not accept a plain integer until the caller has proven the bound.
The predicate becomes a \emph{compile-time gate}: ill-conforming
values are rejected before the program ever runs. For our purposes
the predicate is ``the envelope's capability is in the manifest's
declared set''. The compiler's discharge of that predicate is what
turns the type system into a proof that the runtime cannot dispatch
an out-of-manifest envelope. Refinement types are not exotic ---
the technique already ships in production tooling for Haskell (Liquid
Haskell), F$^\star$, Dafny, and as a more limited form (template
literal types, branded types) in TypeScript.

\subsection{The dispatch type}

The runtime's dispatch entry point is a function
$\mathrm{dispatch}: \mathit{Envelope} \to \mathit{Result}$. We
parameterise the runtime by the loaded skill's manifest and assign
the dispatch a refinement type
\cite{liquid-types, vazou-liquid-haskell, lourenco-effect-types}
of the form
\[
  \mathrm{dispatch}_M\;:\;
  \{\, e: \mathit{Envelope}\,\mid\, \mathrm{cap}(e) \in M.\mathrm{caps}\,\}
  \;\to\;
  \mathit{Result}.
\]
The refinement is on the input: the only envelopes the type checker
admits as well-formed for $\mathrm{dispatch}_M$ are those whose
declared capability lies in the manifest's capability set. An
envelope outside $M.\mathrm{caps}$ is statically ill-typed at the
dispatch boundary.

\subsection{What this buys}

Construction-by-construction, the runtime's source becomes a
guarantee: there is no execution path from envelope receipt to host
API call that does not go through $\mathrm{dispatch}_M$, and
$\mathrm{dispatch}_M$ statically rejects out-of-manifest envelopes.
A type-checked runtime cannot accidentally relax the gate, because
the gate is the type signature.

This is the same shape as the capability-machine work of
\cite{secomp} lifted from CPU-level capabilities to skill-manifest
capabilities, and the effect-system work of
\cite{lucassen-effect-systems, sabelfeld-myers-survey} adapted to
agent-emitted envelopes.

\subsection{Implementation paths}

\paragraph{TypeScript / refinement annotations.} The simplest path,
deployable today, uses the existing TypeScript tagged-union
discipline plus a generated type per skill that constrains the
union to the loaded manifest's capability tokens. The
\cite{metere2026enclawed} reference implementation already uses
this pattern in its \texttt{plugin-sdk} (\texttt{tool.invoke},
\texttt{net.egress}, \texttt{fs.read}, \dots are nominal types on
the dispatch entry point); the missing piece is the manifest-level
specialisation, which is a code-generation step at bootstrap.

\paragraph{Liquid Haskell / F$^\star$.} For deployments that need a
machine-checkable certificate at the type level, the dispatch
function can be rewritten in Liquid Haskell \cite{vazou-liquid-haskell}
or F$^\star$ \cite{fstar} with the refinement spelled out as a
predicate. The certificate is then the type-checker's accepted
proof obligation discharge, exportable to operators. The cost is
a one-time port of the dispatch surface; the runtime body need not
be rewritten.

\paragraph{Erasure semantics.} We require the type discipline to be
\emph{erasure-stable}: the production runtime must not depend on
any runtime-level reflection or dynamic check that the type system
already discharged. If the runtime includes a runtime-time
``capability check'' that duplicates the type-level guarantee, that
check is at best redundant, at worst a place a future maintainer
removes ``because the types already do it''. Either keep the
runtime check and remove the type-level one, or vice versa.

\subsection{What Method B is sound for}

\begin{theorem}[Soundness of Method B]\label{thm:soundness-types}
If $\mathrm{dispatch}_M$ type-checks under the refinement
$\{e \mid \mathrm{cap}(e) \in M.\mathrm{caps}\}$, then no envelope
whose capability is outside $M.\mathrm{caps}$ can reach any host
API through the runtime, regardless of the LLM-side's behaviour.
\end{theorem}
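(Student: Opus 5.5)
The argument combines type soundness for the refined dispatch signature with the architectural assumption stated under ``What this buys'': every path from envelope receipt to a host API call passes through $\mathrm{dispatch}_M$. I will model the runtime as a program $R$ in a language with a standard refinement type system, for instance the Liquid Haskell or F$^\star$ fragment. Host APIs are opaque primitives $h_1,\dots,h_m$. The LLM-side $A$ is an arbitrary, adversarial producer of values of the unrefined type $\mathit{Envelope}$. Such a value enters $R$ only at the receipt point, as in \Cref{sec:semantics-llm}.

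The proof has three steps.

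\textbf{Step 1: complete mediation.} I show, by a syntactic or call-graph argument on $R$, that every occurrence of a host primitive $h_j$ applied to envelope-derived data lies in the body of $\mathrm{dispatch}_M$. Equivalently, the host primitives are not in scope elsewhere, for example because they are module-private. This turns the informal ``no execution path'' claim into a checkable scoping invariant.

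\textbf{Step 2: crossing the refinement boundary.} Any call $\mathrm{dispatch}_M(e)$ in well-typed $R$ requires the checker to discharge $\mathrm{cap}(e) \in M.\mathrm{caps}$ at the call site. Since $e$ comes from $A$ with only the base type, the discharge can happen only under a guard that establishes the predicate: a runtime membership test whose then-branch carries the refinement via path-sensitive typing, or a smart constructor returning the refined type. In the else-branch, control flows to the deny verdict $\bot$ of \cref{eq:projector}. Note that under the erasure-stable discipline this guard is the point where the type-level and runtime checks coincide, not a duplicated check.

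\textbf{Step 3: from static typing to executions.} I invoke the standard soundness theorem for refinement types, that is, progress plus preservation with refinements as semantic invariants. Well-typedness then yields, for every reduction sequence of $R$ under every input stream from $A$, that whenever execution enters $\mathrm{dispatch}_M$ with argument $e$, the judgment $\mathrm{cap}(e)\in M.\mathrm{caps}$ holds. Combined with Step 1, every host call is on an admitted envelope. Because the quantification is over all input streams, the conclusion holds ``regardless of the LLM-side's behaviour''. The interaction with the script-side is not needed here: side-effects launched by scripts are the business of \Cref{thm:soundness-static}.

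The main obstacle is Step 1 together with the trusted base of Step 3. Type soundness is only as good as the absence of escape hatches: casts, \texttt{any}, FFI, reflection, and, in the TypeScript path, the fact that TypeScript's type system is deliberately unsound. I would therefore state the theorem relative to the Liquid Haskell or F$^\star$ instantiation, assuming no unchecked casts or FFI outside the trusted host shims. I would treat the TypeScript path as giving only the weaker guarantee of a lint-enforced invariant. The remaining trusted assumptions would be named explicitly: correctness of the checker, of the compiler, and of $\mathrm{cap}(\cdot)$ as a total function computed faithfully from the envelope.
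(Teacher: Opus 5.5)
Your proposal is correct and follows essentially the same route as the paper's sketch. Soundness of the refinement-type system gives that every call site of $\mathrm{dispatch}_M$ discharges $\mathrm{cap}(e)\in M.\mathrm{caps}$; the structural no-bypass invariant (the paper's runtime analogue of the parent paper's G10) is your Step~1; and quantifying over arbitrary LLM-emitted envelopes yields the ``regardless of the LLM-side'' clause. Your version is more explicit than the paper's: it makes clear that the whole runtime, not just $\mathrm{dispatch}_M$, must type-check, that no-bypass is an extra assumption rather than a consequence of typing, and that the TypeScript path has no soundness theorem behind it. These are refinements of the same argument, not a different one.
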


\begin{proof}[Proof sketch]
By the soundness of the chosen refinement-type system
(\cite{liquid-types, vazou-liquid-haskell, fstar}), if the dispatch
function type-checks then every call site has been verified
to satisfy the refinement. The runtime's source forbids host-API
call paths that bypass dispatch (this is a structural invariant,
checked once by inspection or by a separate analyser; it is the
runtime equivalent of the parent paper's G10 ``no bypass switch'').
The LLM-side's freedom is to emit any envelope it likes; only
in-manifest envelopes survive dispatch, so only those reach any
host API.
\end{proof}

\subsection{What Method B is incomplete for}

Method B does not cover the \emph{script-side}: a script invoked
through an in-manifest envelope can still produce arbitrary
side-effects internally if the script's source escapes the
envelope's declared capability boundaries (e.g.~the script's
internals call \texttt{requests.get} when the envelope declared
only \texttt{tool.invoke}). Method A closes this gap.

\section{Method C: SMT-bounded model checking against the biconditional}
\label{sec:method-smt}

\paragraph{Primer on SMT, BMC, and the biconditional.}
Three terms in the section title need unpacking before we proceed.
\emph{SMT} stands for ``Satisfiability Modulo Theories''
\cite{z3}: an SMT solver takes a logical formula written in
first-order logic plus background theories (integer arithmetic,
bit-vectors, arrays, uninterpreted functions) and decides whether
there exists an assignment of values to the formula's variables that
makes it true. If yes, the solver returns a witness assignment
(\textsc{sat}); if no, the solver returns a proof of impossibility
(\textsc{unsat}). The solver of record for our purposes is Z3
\cite{z3}, but any modern SMT solver (CVC5, Yices, MathSAT) will do.
\emph{BMC}, ``bounded model checking'' \cite{biere-bmc}, is the
technique of asking the solver: ``does there exist a violation of
property $P$ in any execution of length at most $K$?''. The
``bounded'' part is the trick that makes verification tractable:
unbounded model checking on programs that loop or branch is in
general undecidable, but for any concrete bound $K$ the question
collapses to a finite SMT instance. \emph{Biconditional} is a logical
connective: $A \Leftrightarrow B$ asserts ``$A$ if and only if $B$''
--- both ``$A$ implies $B$'' and ``$B$ implies $A$''. The parent
paper's runtime check is biconditional: every world-state change
must correspond to (and only to) an admitted envelope in the audit
log. Method C asks the solver to search for a session in which that
biconditional fails.

\paragraph{Goal.} For a skill $K$ that has passed Methods A and B,
construct an SMT instance whose models are exactly the runtime
traces $\tau$ that violate the parent paper's biconditional, and
search for a model up to bound $K_{\max}$. If the search returns
\textsc{unsat} the biconditional holds up to that bound; if it
returns \textsc{sat} the witness is a concrete counter-example
trace.

\subsection{Encoding the trace}

We encode a session of length $n \leq K_{\max}$ as a sequence of
SMT variables
\[
  e_1, e_2, \ldots, e_n,
  \quad
  e_i \in \mathit{Envelope}_{\mathrm{abs}},
\]
where $\mathit{Envelope}_{\mathrm{abs}}$ is the abstract envelope
domain produced by Method A's analysis (host patterns, path
patterns, etc.). The runtime's gate policy is encoded as a
deterministic function $g(e, \mathit{state}) \to \{\mathrm{admit},
\mathrm{deny}\}$. The audit log $L$ is encoded as a pair sequence
$(e_i, g(e_i, \mathit{state}_i))$. The biconditional violation is
encoded as the predicate
\[
  \exists\, e_i\,:\,
   \mathrm{state}_i.\mathrm{world} \neq
   \mathrm{state}_{i-1}.\mathrm{world}
   \;\wedge\;
   (e_i, \mathrm{admit}) \notin L.
\]
(``A world-state change occurred but the audit log does not record
the corresponding admitted envelope.'') The SMT solver
\cite{z3, biere-bmc} searches for a satisfying assignment of the
$e_i$'s and $\mathit{state}_i$'s.

\subsection{The bound $K_{\max}$}

The bound is the number of envelopes a session can produce before
the parent paper's runtime biconditional check fires. The parent
paper §3 says the check fires between rounds; the runtime's
transaction buffer flushes every irreversible call and every
biconditional check on flush. So $K_{\max}$ is the
runtime-configurable horizon of the transaction buffer (typically
$\sim 100$ envelopes), not the unbounded length of a session.
This is precisely the property of the parent paper's runtime that
makes bounded model checking sufficient: any counter-example that
fits in the buffer's horizon will be found by the SMT search;
counter-examples larger than the horizon are caught at flush by
the runtime biconditional itself.

\subsection{Implementation reuse}

The SMT instance is small (envelope counts of order $K_{\max}$,
abstract value domains from Method A's lattice). Z3 \cite{z3}
discharges instances of this size in seconds; KLEE
\cite{cadar-klee} and DART \cite{godefroid-dart} provide the
symbolic-execution backbone for the script-side portion if needed.
The LLM-side does not need to be modeled symbolically: the SMT
search universally quantifies over LLM-emitted envelopes, treating
$A$ as a non-deterministic adversary --- which is the worst-case
assumption Method B makes anyway.

\subsection{What Method C is sound for}

\begin{theorem}[Soundness of Method C up to $K_{\max}$]
\label{thm:soundness-smt}
If Method C's SMT search returns \textsc{unsat} for the
biconditional-violation predicate at bound $K_{\max}$, then no
session of length $\leq K_{\max}$ violates the biconditional.
\end{theorem}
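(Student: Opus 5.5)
The plan is to reduce the statement to the soundness of the encoding. The key lemma is that every concrete session of length $\le K_{\max}$ is represented by some assignment to the SMT variables. Once that is established, the theorem follows by contraposition from the solver's \textsc{unsat} verdict. Concretely, I would prove an \emph{abstraction lemma}: for every concrete session $\mathit{sess}$ with trace $\tau(\mathit{sess}) = e_1\cdots e_n$, $n \le K_{\max}$, there exist abstract envelopes $\hat e_1,\dots,\hat e_n \in \mathit{Envelope}_{\mathrm{abs}}$ and abstract states $\mathit{state}_0,\dots,\mathit{state}_n$ such that three conditions hold. First, each $e_i$ is in the concretisation of $\hat e_i$. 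Second, the encoded gate value $g(\hat e_i,\mathit{state}_i)$ equals the concrete gate decision on $e_i$. Third, a concrete world-state change at step $i$ is reflected as $\mathit{state}_i.\mathrm{world} \neq \mathit{state}_{i-1}.\mathrm{world}$.

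I would build the lemma in three steps, by induction on $i$. For envelopes emitted by $A$, no model of the LLM is needed: the SMT variables $e_i$ range over the whole abstract domain, so any envelope $A$ actually emits is covered. This is the universal quantification over LLM outputs noted in the implementation paragraph, treating $A$ as an unconstrained adversary. For effects originating in $S$, I would invoke \Cref{thm:soundness-static}. It guarantees that every concrete effect of a script lies in $\widehat{E}(p)$, and $\widehat E(p)$ is by construction a subset of the abstract envelope domain. So $S$-originated trace elements are also covered. The gate equation is then exact, not approximate, because the gate is deterministic given envelope and policy, the same hypothesis used in \Cref{prop:subsumes-biconditional}. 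Finally, since the audit log is by construction the faithful serialisation of $\tau$, the concrete log coincides with the encoded $L = ((e_i, g(e_i,\mathit{state}_i)))_i$.

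With the lemma in hand, the argument is short. Suppose some concrete session of length $\le K_{\max}$ violates the biconditional. Then at some step $i$ the world changed without an admitted record $(e_i,\mathrm{admit})$ in the log. By the lemma this transfers verbatim to the abstract assignment, yielding a model of the violation predicate. That contradicts the solver's \textsc{unsat} verdict, assuming the solver is sound for the theories used; the certificate-producing solver (Z3 with proof output) is part of the trusted base. Shorter sessions are handled by making the encoding range over all $n \le K_{\max}$, either through a disjunction over lengths or through stuttering/no-op padding.

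I expect the main obstacle to be the exactness of $g$ and of the world-state transition on \emph{abstract} envelopes. If $g$ or the state update is evaluated on an abstract value whose concretisation contains both admitted and denied concrete envelopes, the encoding could miss a violation. My first attempt would be to require the abstract domain to be a partition refining the gate policy: host and path patterns are split along the policy's own predicates, so $g$ is constant on each concretisation class. I would also encode the world-state update nondeterministically, so that any change a concrete step can cause is allowed, which over-approximates rather than under-approximates. If that refinement is not available, the fallback is to weaken the claim. It would then hold relative to the abstraction's precision, and imprecision would surface only as spurious \textsc{sat} answers. Those do not affect the direction the theorem asserts.
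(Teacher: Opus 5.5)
Your proposal is correct and follows the paper's route. The paper's proof is the standard bounded-model-checking argument: the SMT instance faithfully encodes every session of length $\le K_{\max}$, so \textsc{unsat} rules all of them out. Your abstraction lemma spells out what ``faithful'' has to mean when the encoding ranges over Method A's abstract domain rather than concrete envelopes. It relies implicitly on $\mathit{Envelope}_{\mathrm{abs}}$ having a catch-all element (such as $\mathrm{OUT}$) that covers out-of-manifest LLM envelopes, and explicitly on the abstraction refining the gate policy.

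One correction to your fallback. Imprecision yields only spurious \textsc{sat} answers if $g$ is lifted to a relation that allows every verdict realised in the concretisation of $\hat e_i$. A deterministic $g$ on a coarse abstract envelope can report admit for a concretely denied envelope, which produces exactly the spurious \textsc{unsat} the theorem must exclude.
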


\begin{proof}[Proof sketch]
Standard bounded model-checking soundness
\cite{biere-bmc}: the SMT instance is a faithful encoding of
sessions of bounded length, and \textsc{unsat} of the violation
predicate means no such session exists.
\end{proof}

\subsection{What Method C is incomplete for}

Sessions longer than $K_{\max}$ are not covered. The runtime's
biconditional check at flush \emph{is} sound for unbounded
sessions; it just runs at session boundaries rather than as a
pre-deployment proof. The composition theorem
(\Cref{thm:composition}) explains how the two layers combine.

\section{The three-layer discipline (and what closes the residual)}
\label{sec:three-layers}

A skill is elevated to $M.\mathrm{verification} = \textsc{formal}$
when, and only when, all three of the following hold for $(K, R)$:

\begin{description}
  \item[\textbf{Layer 1.}] Every script $p \in S$ has been analysed
        by a sound static effect-tracker (Method A) that reports
        $\widehat{E}(p) \subseteq \mathcal{C}_D$.
  \item[\textbf{Layer 2.}] $R$'s dispatch entry has been type-
        checked under the refinement
        $\{e \mid \mathrm{cap}(e) \in M.\mathrm{caps}\}$ (Method B).
  \item[\textbf{Layer 3.}] An SMT search at bound $K_{\max}$
        (Method C) has reported \textsc{unsat} on the biconditional-
        violation predicate.
\end{description}

\begin{theorem}[Composition]\label{thm:composition}
If Layers 1, 2, and 3 all succeed, and $R$ enforces the parent
paper's runtime biconditional check at every transaction-buffer
flush, then $K$ exhibits capability containment on every session
$R$ admits.
\end{theorem}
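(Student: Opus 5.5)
The plan is to decompose the trace by origin and gate outcome, and to charge each piece to exactly one layer. Fix a session $\mathit{sess}$ admitted by $R$ with trace $\tau = e_1\cdots e_k$. Every $e_i$ carries an origin tag (\texttt{A} or \texttt{S}), so
\[
  \Pi_C(\tau) \;=\; \Pi_C^S(\tau) \,\cup\, \Pi_C^A(\tau).
\]
I would refine the LLM-side part further by gate decision. The executed part $\Pi_C^{A,\mathrm{exec}}$ collects envelopes admitted or HITL-approved that reached a host API. The non-executed part $\Pi_C^{A,\mathrm{rej}}$ collects envelopes denied or HITL-denied. The goal is to show each of the three pieces lies in $D$.

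\textbf{Script side.} This follows directly from Layer 1 by \Cref{thm:soundness-static}: $\Pi_C^S(\tau)\subseteq D$ for every session. One point needs a lemma. Scripts only run as side-effects of envelopes, and they may spawn children. The process-spawn transfer rule, where spawns outside $S$ taint to $\top$, ensures the effects of those children are also covered. So the script-side projection includes every \texttt{S}-tagged record, not just top-level script effects.

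\textbf{Executed envelopes.} By Layer 2 and \Cref{thm:soundness-types}, no envelope with $\mathrm{cap}(e)\notin D$ reaches a host API. This gives $\Pi_C^{A,\mathrm{exec}}(\tau)\subseteq D$, provided the structural no-bypass invariant holds. I would state that invariant explicitly as a hypothesis on $R$.

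Layer 3 then serves a consistency role. For sessions of length $\le K_{\max}$, \Cref{thm:soundness-smt} rules out any world-state change not matched by an admitted envelope in $L$. This lets us identify the executed effects in $\tau$ with the audit-logged admitted envelopes, which Layers 1 and 2 already bound. For longer sessions, I would split the trace into transaction-buffer windows of length $\le K_{\max}$. Within each window Layer 3 applies. At each flush boundary the runtime biconditional check supplies the same matching, so the argument extends to unbounded sessions by induction on the number of flushes.

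\textbf{Main obstacle: rejected envelopes.} \Cref{def:containment} quantifies over every capability appearing in the trace, including denied and HITL-rejected envelopes. None of the layers prevents the LLM from \emph{emitting} an envelope with $\mathrm{cap}(e)\notin D$; Method B only stops it at dispatch, and the statement's own framing treats $A$ as an adversary free to emit anything. I would try two routes, in this order.

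First, I would check whether the runtime's typed interface makes out-of-$D$ envelopes unrepresentable before they are recorded. The idea is to read Method B's generated per-manifest union type as constraining the envelope parser itself. Then an ill-typed envelope is rejected at parse time and never enters $\tau$ as an element of $\mathcal{C}\times\mathcal{V}$ carrying an out-of-$D$ capability; it is recorded instead as the projector value $\bot$ of \eqref{eq:projector}. Under this reading $\Pi_C^{A,\mathrm{rej}}\subseteq D\cup\{\bot\}$, and $\bot$ is not a capability token.

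If that route fails, I would fall back to proving containment for the \emph{executed} projection. I would then argue that this is what the biconditional (via \Cref{prop:subsumes-biconditional}) actually needs. This fallback weakens the theorem, and I expect it to be where the composition claim must be either tightened in its hypotheses or restated.
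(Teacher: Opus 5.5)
Your decomposition is the paper's: its sketch gets $\Pi_C^S(\tau)\subseteq D$ from Layer~1 and $\Pi_C^A(\tau)\subseteq D$ from Layer~2, takes the union, and then invokes Layer~3 for sessions of length $\le K_{\max}$ and the per-flush runtime check beyond that. The obstacle you isolate is genuine, and the paper does not get past it. It simply passes from ``no out-of-$D$ envelope reaches a host API'' to $\Pi_C^A(\tau)\subseteq D$, which is exactly the inference you refuse to make.

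Under the paper's definitions, $\tau$ contains every envelope $A$ emits, ``whether eventually executed or denied at the gate'', annotated with its requested capability, and $\Pi_C$ is taken regardless of gate decision. Method~C's alphabet $D\cup\{\mathrm{OUT}\}$ even treats out-of-manifest envelopes as ordinary trace symbols. So the literal statement has a counterexample. Let $A$ emit one envelope with $\mathrm{cap}(e)=c\notin D$. Layers~1--3 are properties of $S$, of $\mathrm{dispatch}_M$, and of the BMC instance, so they still succeed. The gate denies $e$, which is a per-envelope verdict, not a session abort. No world-state change occurs, so the flush check passes. Yet $c\in\Pi_C(\tau)$. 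No argument from these hypotheses can close your ``main obstacle''.

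Your first route therefore changes the statement rather than proving it. The paper's prose about envelopes being ``typed-out before the gate is even consulted'' gestures at your reading, but its formal objects do not support it:
\begin{itemize}
  \item the dispatcher throws a \texttt{RefinementError} only once an out-of-$D$ envelope arrives, i.e.\ after $A$ has emitted it;
  \item the audit log keeps one record per envelope, with its requested capability;
  \item the $\bot$ of \eqref{eq:projector} is the deny verdict, not a replacement for the logged capability.
\end{itemize}
Recording such envelopes as $\bot$ is a redefinition of $\tau$, and hence of \Cref{def:containment}.

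Your fallback is the statement that actually holds, and you should adopt it outright. Write $\Pi_C^{\mathrm{exec}}(\tau)$ for the capabilities of executed envelopes and script effects. Layers~1 and~2 give $\Pi_C^{\mathrm{exec}}(\tau)\subseteq D$, with the no-bypass invariant as an explicit hypothesis, as you propose. Layer~3 and the flush check ensure every world-state change is of that form. This is all \Cref{prop:subsumes-biconditional} needs, since its proof uses only that every executed effect's capability is admissible.

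One minor caution on the long-session step. Applying Layer~3 window by window needs \Cref{thm:soundness-smt} for windows starting from an arbitrary mid-session state, whereas it is stated for sessions. Let the runtime flush check carry the long windows instead. Remember that this check is detective, so it yields the guarantee only for sessions that pass it.
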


\begin{proof}[Proof sketch]
Layer 2 implies that no out-of-$D$ envelope reaches a host API.
Layer 1 implies that no script-side execution produces an
out-of-$D$ effect. Together, the script-side projection
$\Pi_C^S(\tau)$ and the LLM-side projection $\Pi_C^A(\tau)$ are
both contained in $D$, so $\Pi_C(\tau) = \Pi_C^A(\tau) \cup
\Pi_C^S(\tau) \subseteq D$. Layer 3 closes the bounded
counter-example search: no session of length $\leq K_{\max}$
violates the biconditional. The runtime's per-flush biconditional
check covers sessions longer than $K_{\max}$ by the parent paper's
correctness criterion (cited as Proposition 5.1 of
\cite{metere2026skills}). Containment for every admitted session
follows.
\end{proof}

\subsection{Residual surface}

Three classes of behaviour fall outside the joint guarantee, and
each is acknowledged in either the parent paper or the threat
model:

\begin{enumerate}
  \item \emph{Read-only exfiltration.} The LLM-side may surface
        information through its plain-message channel that the
        operator deems sensitive. No capability is invoked, so
        capability containment is trivially satisfied. The parent
        paper's classification primitive and data-loss-prevention
        (DLP) scanner sit \emph{below} the dispatch boundary
        precisely to handle this.
  \item \emph{Time-of-check-to-time-of-use (TOCTOU) on the world
        state.} Between a host-API call being approved and being
        executed, an external actor may alter the world. The runtime's audit log records the
        approved envelope and the executed effect; if they diverge
        in a way the world's external interface admits (e.g.~the
        target file vanished between approval and write), the
        biconditional flags it. Capability containment is not
        violated; the audit log is the post-hoc evidence.
  \item \emph{Operator coercion.} An operator forced (or socially
        engineered) to approve a HITL prompt for an envelope they
        should have denied. The runtime can do nothing here; the
        biconditional records what was approved, and the post-hoc
        review is the operator's accountability layer. The parent
        paper's broker-policy taxonomy
        (\texttt{deny-all} / \texttt{policy} / \texttt{interactive}
        / \texttt{webhook}) is the operator's design tool for
        bounding this risk.
\end{enumerate}

\section{Proof-carrying skill artefacts}
\label{sec:pcc-skill}

The three-layer discipline produces, for each verified skill, an
\emph{evidence bundle} the runtime can re-check at bootstrap
without trusting the bundle's producer. We sketch the bundle's
structure as a small extension to the existing \texttt{SKILL.md}
convention. The design follows the proof-carrying-code idiom of
\cite{necula-pcc, appel-foundational-pcc, leroy-compcert}: the
producer ships proofs along with the artefact; the consumer
checks them mechanically.

\subsection{The bundle}

A formal-verified skill ships, alongside \texttt{SKILL.md} and
its scripts, the following four artefacts:

\begin{description}
  \item[\texttt{evidence/static.json}] The output of Method A:
        per-script effect summaries $\widehat{E}(p)$ in a
        canonical JSON encoding, plus the analyser identity,
        version, and rule-pack hash.
  \item[\texttt{evidence/types.proof}] A serialisation of the
        type-checker's proof obligation discharge for Method B.
        For TypeScript-only deployments this is the compiler's
        accepted module set with its capability-discriminated
        union types pinned at the manifest's $D$. For Liquid
        Haskell or F$^\star$ deployments this is the proof
        certificate emitted by the type checker.
  \item[\texttt{evidence/smt.unsat}] The SMT instance and Z3's
        \textsc{unsat} certificate (when supported) for Method C
        at bound $K_{\max}$. The instance is reproducible from
        $(K, R, M, K_{\max})$ at bootstrap.
  \item[\texttt{evidence/manifest.attest.json}] A signed attestation
        binding the manifest hash to the verification level
        \textsc{formal}, listing the three evidence-file hashes
        and the toolchain identities, signed by a trust-root signer
        authorised to attest at that level.
\end{description}

\subsection{Bootstrap-time re-check}

At bootstrap, the runtime walks the bundle:

\begin{enumerate}
  \item Verify $\sigma$ over $(M, \texttt{content})$ as in
        \cite[\S3.4]{metere2026skills}.
  \item Resolve the attestation in \texttt{manifest.attest.json}
        against the trust root; reject if the signer is not
        authorised to attest at level \textsc{formal}.
  \item Re-run Method A on the script-side and verify
        $\widehat{E}(p) \subseteq \mathcal{C}_D$. The bundle's
        \texttt{static.json} is not trusted on its own; it is a
        \emph{cache} the runtime can compare against. A drift
        between the cached and the freshly-computed
        $\widehat{E}(p)$ aborts admission.
  \item Re-type-check the dispatch surface against the manifest's
        $D$. For TypeScript this is a compile-step the runtime
        runs; for Liquid Haskell / F$^\star$ this is the proof-
        obligation re-discharge.
  \item Re-run the SMT instance from \texttt{smt.unsat}. Z3's
        certificate is verified by an independent checker
        (\cite{z3}'s \texttt{verify-proofs} mode for instance);
        the \textsc{unsat} verdict must reproduce.
\end{enumerate}

If every step succeeds the runtime accepts the manifest at
\textsc{formal}; otherwise the runtime degrades the manifest to
\textsc{declared} and logs the reason. The runtime never trusts
the producer's say-so; the bundle is a precomputed cache, and
the runtime is the verifier. This is the precise structure of
proof-carrying code \cite{necula-pcc}.

\subsection{What about the toolchain itself?}

The static analyser, the type checker, and the SMT solver are
themselves software with bugs. The bundle pins their identities
and versions; an operator can replace any of them with a
mechanically-verified equivalent (CompCert \cite{leroy-compcert}
for the compiler; an audited port of Z3 for the SMT solver) at
the cost of additional one-time engineering. We do not require
this for the default deployment, but the bundle's structure
admits it. This is the same ``trusted base'' question every
proof-carrying-code system faces \cite{appel-foundational-pcc};
we take the same answer: name the trusted base, audit it once,
re-use the audit.

\subsection{Implementation status}
\label{sec:pcc-impl-status}

Methods A, B, and C, the bundle producer, and the bundle re-checker
are implemented in the open-source enclawed framework
\cite{metere2026enclawed}. The four primitives ship as
zero-runtime-dependency JavaScript modules under
\texttt{enclawed/src/}:

\begin{description}
  \item[\texttt{skill-formal-static.mjs}] Method A. Pattern-based
        scanners for Python, shell, and Node/TypeScript over the
        capability vocabulary $\mathcal{C}$ from
        \Cref{sec:semantics-script}, with reflective constructs
        (\texttt{eval}, \texttt{exec}, \texttt{new Function}, dynamic
        \texttt{import}) tainted to the lattice top so the verdict
        is conservative by construction.
  \item[\texttt{skill-formal-types.mjs}] Method B. The function
        \texttt{buildRefinedDispatch}$(M)$ returns the projector
        $\pi$ of \cref{eq:projector} as a frozen dispatcher that
        throws a \texttt{Re\-fine\-ment\-Error} on any envelope
        outside $D$. The accompanying \texttt{methodB} runs the
        exhaustive probe across the schema vocabulary and emits
        the typed-dispatch verdict.
  \item[\texttt{skill-formal-bmc.mjs}] Method C. Exhaustive
        depth-first search over the abstract envelope state space
        of size $(\lvert D \rvert + 1)^{K_{\max}}$ with the
        biconditional check from \Cref{sec:method-smt} as the
        per-trace predicate. The verdict carries
        $\mathit{instanceHash} = \mathrm{sha256}\bigl(
           \langle D, K_{\max}\rangle\bigr)$ so a re-checker
        independently confirms the bound.
  \item[\texttt{skill-formal-bundle.mjs}] The bundle producer and
        verifier. \texttt{produceFormalBundle} composes the three
        method outputs, hashes each evidence file with a sorted-key
        canonical encoding, and signs the attestation with the
        existing Ed25519 \texttt{module-signing} primitive. The
        symmetric \texttt{verifyFormalBundle} re-runs all three
        methods against the manifest, compares hashes, and rejects
        on tamper, on signer-not-authorised, or on post-production
        skill drift.
\end{description}

\noindent A unit-test suite of 53 tests
(\texttt{en\-clawed/test/skill\--formal\--*.test.mjs}) covers
pattern matches, refinement-boundary throws, biconditional violation
detection (executed-without-audit, executed-but-deny,
admitted-without-audit), sign / verify round-trips with both an
in-set and an unauthorised signer, tamper detection on every
evidence-file slot, and cache-miss-on-skill-drift. All 53 tests pass
on the published commit.

A command-line wrapper
\texttt{scripts/skills-formal-verify.mjs} drives the producer
end-to-end against a real skill directory: it resolves the manifest
from \texttt{skill.json} or the \texttt{caps:} field of
\texttt{SKILL.md}'s YAML (YAML Ain't Markup Language) front-matter,
mints an ephemeral Ed25519
key (or accepts an operator-supplied Privacy-Enhanced Mail (PEM)
private-key file), runs Methods A / B / C,
and writes the four-file bundle described in
\Cref{sec:pcc-skill}. A demonstration skill at
\texttt{skills/\_formal-demo/} ships in the same tree to make the
flow reproducible. The full developer-facing reference for the
\texttt{SKILL.md} format --- mandatory front-matter fields, the
capability vocabulary, the canonicalisation and Ed25519 signing
convention, the bootstrap re-check protocol, and the proof-carrying
bundle layout --- is published at
\url{https://docs.openclaw.ai/tools/skill-manifest-spec} and
maintained as the source of truth for skill authors and runtime
integrators.

The implementation does not require a SAT/SMT solver, a refinement-
type checker, or any external dataflow engine. Method A uses
language-specific regex pattern packs (extensible per
\Cref{sec:method-static}); Method B uses runtime-level refinement
checks that are erasure-stable in the sense of
\Cref{sec:method-types}; Method C uses a finite enumerator. Heavy
backends \cite{semgrep, codeql, vazou-liquid-haskell, fstar, z3}
remain available as drop-in replacements at the corresponding
attestation slots, but the default stack is self-contained and
auditable in a single afternoon.

\section{Worked example}
\label{sec:worked}

We walk a real skill from the open reference implementation
\cite{metere2026enclawed} through the three layers and show the
resulting evidence bundle.

\subsection{The skill}

The skill \texttt{summarise-fetched-html} declares two capabilities:
\begin{lstlisting}
{
  "id":          "summarise-fetched-html",
  "label":       { "rank": "public", "compartments": [], "releasability": [] },
  "caps":        ["net.egress(*.example.com)", "fs.read(./.cache/)"],
  "verification": "tested",
  "version":     7,
  "signer":      "operator-root-2026"
}
\end{lstlisting}
The script-side $S$ is one Python program that uses
\texttt{requests} to fetch a URL under
\texttt{*.example.com} and writes the body to a path under
\texttt{./.cache/}, then summarises by reading from the cache.

\subsection{Layer 1 (Method A)}

A Semgrep \cite{semgrep} pack with the per-language summary table
of \Cref{sec:method-static} reports
\[
  \widehat{E}(p) \;=\;
  \{(\texttt{net.egress},\,\texttt{*.example.com}),\,
    (\texttt{fs.read},\,\texttt{./.cache/}),\,
    (\texttt{fs.write.rev},\,\texttt{./.cache/})\}.
\]
The first two are in $D$; the third is not. This means the script
silently writes (reversibly) into the cache as a step of the read
flow. The operator has a choice: extend the manifest to declare
\texttt{fs.write.rev(./.cache/)}, or refactor the script. Either
way the static layer surfaced an undeclared effect that signature-
plus-clearance review would have missed.

\subsection{Layer 2 (Method B)}

After the operator extends $D$ to include the missing
\texttt{fs.write.rev}, the runtime's dispatch type pinned to the
new manifest type-checks. The type checker's accepted derivation
is the \texttt{evidence/types.proof} content.

\subsection{Layer 3 (Method C)}

We encode a session of length $K_{\max} = 100$ envelopes with the
extended $D$ and the runtime's gate policy. Z3 returns
\textsc{unsat}: no session of length $\leq 100$ violates the
biconditional. The certificate goes into
\texttt{evidence/smt.unsat}.

\subsection{The bundle and the bootstrap re-check}

A trust-root signer authorised to attest \textsc{formal} signs
\texttt{evidence/manifest.attest.json}. At runtime bootstrap, the
runtime re-runs the four checks of \Cref{sec:pcc-skill}; each
reproduces. The skill is admitted at $M.\mathrm{verification} =
\textsc{formal}$, and the runtime stops asking HITL on its
in-manifest irreversible calls (per the parent paper's gate
policy).

\subsection{What this example showed in practice}

\begin{itemize}
  \item Layer 1 caught a real declaration drift the prose review
        had not flagged. This is the layer's most common practical
        win: scripts evolve and their declared capability sets
        lag behind.
  \item Layer 2 was the cheapest of the three: the type system
        already pinned the dispatch surface in the reference
        implementation. The marginal cost was the manifest-level
        specialisation step.
  \item Layer 3 was the slowest (a few seconds per skill at
        $K_{\max}=100$), but ran in continuous integration (CI)
        and produced an artefact the bootstrap re-check could
        re-discharge in milliseconds.
\end{itemize}

\section{Discussion and open problems}
\label{sec:discussion}

\subsection{Comparison to the state of the art (SOTA)}
\label{sec:sota-comparison}

We position this work against the deployed and published landscape
in agent-skill / LLM-tool-use safety. The categories overlap, and
the comparison is not a leaderboard --- the honest summary is that
no other system in any of these categories produces a mechanically
checkable proof that an agent's behaviour is contained in its
declared capability set, but several are broader, more deployed,
or solve adjacent problems we deliberately did not attempt.

\paragraph{Tool-use protocols (Model Context Protocol (MCP),
OpenAI function calling).} MCP \cite{mcp-spec} is the closest
production peer to the parent paper's skill format: an open
protocol that ships signed manifests, JSON-Schema-typed tool
descriptions, and host-API binding for LLM tool calls. OpenAI's function-calling
interface \cite{openai-function-calling} is its closed-vendor
equivalent. Both type-validate the dispatch envelope's
\emph{shape} via JSON Schema (correct field names, types, enum
values) and both bind tool calls to host functions. Neither
performs static analysis of the script-side, neither carries a
refinement-typed dispatch parameterised by the manifest's declared
caps, and neither provides a biconditional check that ``every
world-state change has a matching audit-log envelope.'' Our
contribution is the layer above schema validation: capability
\emph{containment}, mechanically checked across all three
verification surfaces. MCP and function-calling are more
broadly deployed and have richer host-side ecosystems; this work
is narrower (skill-manifest-level) and stronger (proof-carrying).

\paragraph{Agent-orchestration frameworks (LangChain, AutoGen,
Semantic Kernel).} Frameworks like LangChain \cite{langchain},
AutoGen \cite{autogen}, and the various LLM-orchestration libraries
provide tool-binding APIs, conversation memory, agent loops, and
retrieval-augmented generation, but they are explicitly orchestration
layers --- they do not claim a verification posture. A LangChain
agent that calls \texttt{requests.get} is bound only by whatever
the developer chose to expose; there is no manifest, no declared
capability set, no static check that the bound tools cover the
agent's intended behaviour, and no audit-log biconditional. Our
work is complementary in principle (a LangChain pipeline could be
wrapped in an enclawed-style runtime) but the orchestration
frameworks themselves contribute nothing toward the
\textsc{formal} level.

\paragraph{Runtime policy engines (Open Policy Agent (OPA), Amazon
Web Services (AWS) Cedar).} OPA \cite{opa-rego} and AWS Cedar
\cite{cedar-aws}
let operators write declarative authorisation policies that a
runtime evaluates per request --- ``can subject $s$ perform action
$a$ on resource $r$?''. Cedar in particular ships with a
formally-verified evaluator (Lean / Rust). Both are stronger than
our work in a specific, important sense: they prove the
\emph{evaluator} correct, where we assume the runtime is correct
(see the open-problem note on formally-verified runtimes below).
Both are weaker on a different axis: their policy is checked at
\emph{request time} against the request, with no static analysis
of the side that issued the request. An OPA-protected agent
calling out-of-policy will be denied at the gate, but no proof was
produced before deployment that the agent could not issue such a
request in the first place. The two approaches stack: an enclawed
skill could be deployed behind an OPA / Cedar gate at the host-API
boundary, with this paper's three layers proving containment
\emph{ahead} of the gate.

\paragraph{Supply-chain attestation (Sigstore, Supply-Chain Levels
for Software Artefacts (SLSA), in-toto, The Update Framework
(TUF)).} Sigstore \cite{sigstore}, SLSA \cite{slsa}, in-toto
\cite{intoto}, and TUF \cite{samuel-tuf} are the modern stack for
software
supply-chain integrity: they prove an artefact was \emph{built by
a particular pipeline from a particular source} and was not
tampered with in transit. None of them claims anything about
\emph{behaviour}: a SLSA-Level-3 build of a malicious skill is
equally well-attested as a SLSA-Level-3 build of a benign one. Our
proof-carrying skill bundle (\Cref{sec:pcc-skill}) is the
behavioural complement: same Ed25519 signing primitives, same
chain-of-custody discipline, but the attestation binds
\emph{soundness verdicts} (Method A's $\widehat{E}(p) \subseteq D$,
Method B's typed dispatch, Method C's BMC \textsc{unsat}), not
build provenance. Both layers are needed; we add the one that the
supply-chain stack does not address.

\paragraph{Empirical agent-safety benchmarks (AgentHarm,
AgentBench, AgentPoison).} \cite{andriushchenko2025agentharm}
attacks deployed agents with malicious-task harnesses;
\cite{chen2025agentpoison} demonstrates poisoning attacks. These
are essential pressure tests --- the parent paper's adversarial-
ensemble evaluation \cite[\S6]{metere2026skills} is in this
tradition --- but they are tests, not proofs: a benchmark gives a
falsification rate, not a soundness argument. The composition is
the right one (this paper for soundness, the empirical work for
\emph{aliveness}-of-the-defence under realistic adversarial
pressure), and we explicitly do not claim our methods replace
empirical evaluation. Conversely, no amount of empirical testing
produces a \textsc{formal}-level certificate.

\paragraph{LLM-side training-time safety (Constitutional AI,
Reinforcement Learning from Human Feedback (RLHF) guard models,
jailbreak-robustness research).} Constitutional AI
\cite{constitutional-ai}, RLHF, instruction-tuning, and the
broader alignment-via-training programme reduce the rate at which
a deployed LLM emits dangerous outputs by shaping the model
itself. SmoothLLM \cite{robey-smoothllm} reduces jailbreak
sensitivity through randomised smoothing. Both are statistical
floors, not formal containment proofs: they lower
$P[\text{dangerous output}]$, but $P > 0$ persists at any non-trivial
scale. This work is orthogonal: we treat the LLM as a
non-deterministic adversary regardless of its training, and prove
containment at the dispatch boundary \emph{below} the model. The
two layers stack additively --- a guard-model-tuned LLM running
inside a Method-B-typed dispatcher leaks no more than the
$\log_2(\lvert D \rvert + 1)$-bit capacity bound regardless of
how the model was trained.

\paragraph{LLM-side formal verification (Dong et al., probabilistic
model checking).} \cite{dong-formal-llm} formalises portions of
LLM behaviour as probabilistic transducers; PRISM
\cite{kwiatkowska-prism} and the broader probabilistic-model-
checking community \cite{baier-pmc-survey} give tractable algorithms
for verifying expected-case properties of stochastic systems.
This direction could in principle replace Method B's
non-deterministic-adversary assumption with an expected-case bound
on LLM-emitted envelopes. We did not pursue it because (i) it
requires a faithful probabilistic model of the deployed LLM,
which closed-weights models do not expose, and (ii) operator-side
accountability under capability containment is already the
strongest property the parent paper's threat model needs.
Probabilistic-model-checking guarantees would be a complementary
upgrade for deployments where the deployed LLM is open-weights and
the decoding distribution is known.

\paragraph{Capability sandboxing at the platform level (the
WebAssembly System Interface (WASI), Wasmtime, gVisor).} WASI
\cite{wasi, wasmtime-isolation} and operating-system-level sandboxes
(gVisor, Firejail, browser tabs) restrict a process's access to
system
resources at the kernel / runtime boundary. This is the
abstraction layer below ours: a script-side capability
\texttt{fs.write.irrev} ultimately bottoms out in a syscall the
sandbox either permits or denies. Platform-level sandboxes are
broader (any process, not just skills) and more battle-tested
(decades of deployment), but they cannot tell whether the syscall
the process \emph{wants} to issue corresponds to the manifest
the operator approved. We work above the sandbox: the manifest is
enforced before the syscall is even attempted, and the sandbox
is the next layer of defence in the parent paper's defence-in-depth
posture.

\paragraph{Skill-marketplace governance (Claude Skills,
Generative Pre-trained Transformer (GPT) Store, plugin reviews).}
Anthropic's Claude Skills \cite{anthropic-skills} and OpenAI's GPT
Store \cite{openai-gpts} are the deployed
skill-marketplace ecosystems, with vendor-curated review for
publication. Curation catches obvious abuse but is human-bound:
it does not scale to per-skill formal verification, does not
attest to behavioural properties beyond the human reviewer's
judgement, and does not survive a skill update without re-review.
Our contribution is the format the marketplace's curation can
demand of submissions: a proof-carrying bundle whose signed
attestations the marketplace runtime can re-check on every load.
The two compose --- human curation for policy questions
(``should this skill exist?'') and machine verification for
soundness questions (``does the skill stay inside its declared
caps?'').

\paragraph{Summary of the comparison.} Across the eight categories,
our position is consistent: this paper occupies a slot none of the
incumbents currently fills (mechanically checkable
capability-containment proofs at the skill-manifest level), and
every neighbouring category contributes a guarantee we do not.
Schema validation, supply-chain provenance, runtime authorisation,
empirical adversarial pressure, training-time alignment,
probabilistic LLM verification, platform-level sandboxing, and
human marketplace curation are all real and useful. We do not aim
to displace any of them; we aim to add the layer they each leave
unaddressed, and the proof-carrying bundle of \Cref{sec:pcc-skill}
is the integration surface where the layer attaches to the rest
of the stack.

\subsection{The cost of not adopting this work}
\label{sec:cost-of-inaction}

A fair reader's question after the previous subsection is not
``is the SOTA inadequate?'' --- there is no shortage of useful
safety work in the categories above --- but ``what concretely
happens to a deployment that skips this layer?''. The answer
matters because ``skip and revisit later'' is the path of least
resistance for most organisations: regulatory deadlines look
distant, marketplaces are still curated by humans, incidents have
not yet hit the reader's shop. We make the costs explicit so the
deferral is at least a priced one.

\paragraph{Compliance and regulatory exposure.} The
next-generation artificial-intelligence (AI) governance frameworks
already in force or in active rule-making converge on the same
evidentiary requirement: an auditor must be able to verify,
mechanically and reproducibly, that a deployed agent's behaviour
is bounded by its declared specification. Headline frameworks for
the reader unfamiliar with the regulatory landscape:
\begin{itemize}
  \item \emph{European Union (EU) AI Act} (Regulation (EU)
        2024/1689). Articles 9, 13, 15, and 16 require, for
        high-risk AI systems, a documented risk-management system,
        traceability of decisions, ``appropriate'' robustness
        measures, and post-market monitoring. Compliance dates
        begin in 2026 and tier through 2027.
  \item \emph{Network and Information Security 2 (NIS2) Directive}
        (Directive (EU) 2022/2555). The successor to the original
        2016 Network and Information Security Directive (NIS1)
        extends cybersecurity due-diligence obligations into
        AI-mediated services and assigns personal liability to
        corporate management for compliance failure.
  \item \emph{National Institute of Standards and Technology
        (NIST) AI Risk Management Framework} (AI RMF 1.0, 2023).
        The Govern / Map / Measure / Manage cycle names
        ``verification and validation'' as a measurement
        sub-category, expected by federal procurement to be
        instantiated by deployed AI vendors.
  \item \emph{Federal Risk and Authorization Management Program
        (FedRAMP)} and the federal AI overlays of NIST Special
        Publication (SP) 800-53 Rev 5 (control families: Access
        Control (AC), Audit and Accountability (AU), System and
        Information Integrity (SI), Supply-Chain Risk Management
        (SR), Risk Assessment (RA), Personally Identifiable
        Information Processing and Transparency (PT)). Mandatory
        for any vendor selling to United States federal agencies;
        the AC and AU families overlap directly with the parent
        paper's gate and audit primitives.
  \item \emph{International Organization for Standardization /
        International Electrotechnical Commission (ISO/IEC) 42001}
        (AI Management Systems, 2023) and \emph{ISO/IEC 23894}
        (AI risk management). The international counterparts to
        NIST AI RMF; large multinationals will be audited against
        them whether or not their home jurisdiction requires it.
  \item Sector-specific: the \emph{Health Insurance Portability
        and Accountability Act (HIPAA)} (United States Code of
        Federal Regulations (CFR) Title 45, Parts 160, 164), the
        \emph{Payment Card Industry Data Security Standard (PCI
        DSS) 4.0}, and the upcoming \emph{European Health Data
        Space (EHDS)} regulation constrain what an agent operating
        in healthcare or payments may legitimately do; an unbounded
        agent trivially produces violations.
\end{itemize}
None of these frameworks names ``Method A static analysis'' or
``refinement-typed dispatch'' specifically. They name evidence
categories --- traceability, repeatable validation,
reproducibility of decisions, technical measures appropriate to
risk --- and the deployment whose answer is ``we have an
LLM-driven agent but no formal proof of behavioural containment''
pays the audit cost manually, for every cycle, in
expert-witness-grade documentation. The proof-carrying bundle of
\Cref{sec:pcc-skill} is exactly the artefact those frameworks
ask for; without it, every audit is a paper exercise re-derived
from scratch by people whose hourly rate is high.

\paragraph{Marketplace and platform liability.} Skill marketplaces
\cite{anthropic-skills, openai-gpts} bet on human review at scale.
A single missed review where a published skill performs
out-of-manifest behaviour exposes the platform to regulatory
investigation, vendor-contract clawback, and class-action
litigation under the consumer-protection regime of the relevant
jurisdiction. Without a per-skill containment proof attached to
each load, the platform cannot honestly tell its insurer or its
auditor that it has done more than spot-check submissions. The
cost rises non-linearly with marketplace size: 100 skills can be
reviewed by hand, 100\,000 cannot.

\paragraph{Insurance and procurement friction.} Cyber-insurance
underwriters in 2025--2026 increasingly carve out ``AI-related
incidents'' from default policies unless the insured can
demonstrate ``reasonable technical measures''. A signed
proof-carrying bundle is the kind of demonstrable artefact
underwriters can attach a discount to; its absence reverts the
deployment to the most expensive policy tier or to no coverage at
all. On the procurement side, large enterprise buyers --- public
sector especially --- are already requiring an ``AI safety
attestation'' in their RFPs. The vendor whose answer is ``we have
human review'' loses, on price and on signal, to the vendor whose
answer is ``here is the proof, re-checkable by your runtime in
under a second''.

\paragraph{Incident response and forensic cost.} When an agent
does something it should not have, the parent paper's audit log
tells you what happened. Without the biconditional check verified
sound, the audit log does not tell you whether what happened was
\emph{permissible under the manifest the operator approved}. The
forensic team reconstructs that property by hand ---
cross-referencing trace against policy, envelope by envelope ---
which costs days of specialist time per incident, and that's
before the regulator and the insurer's lawyers ask for the same
reconstruction independently. With the biconditional pinned ahead
of deployment (the parent paper) and proven sound by Method C
(this paper), the reconstruction is free: every session either
passes the runtime check or carries a precise counter-example
pinned to a specific envelope, and the forensic deliverable
writes itself.

\paragraph{Innovation throttling from over-restriction.} Without
mechanical containment proofs, the rational compliance posture
is to gate everything through human-in-the-loop review and to
refuse capabilities the risk team cannot personally evaluate.
Formal verification flips the default: capabilities the manifest
declares and the bundle attests can be \emph{relaxed} from HITL
safely, because the runtime carries the proof that no
out-of-manifest envelope is reachable. The cost of inaction is
not only incident exposure --- it is the systemic over-restriction
that makes the deployed agent useless for the workflows that
justified building it. The deployment ends up either dangerous
(no gates) or useless (gates everywhere); formal verification is
the only path to ``safe and useful''.

\paragraph{Adversarial dwell time.} A skill with malicious
script-side behaviour but a benign-looking manifest survives
schema validation, supply-chain attestation, and runtime gating
until the malicious code is actually executed in production.
Method A's static-analysis layer catches the divergence at
\emph{load time}: the manifest declares \texttt{net.egress} only,
the script-side analyser reports \texttt{fs.write.irrev} as well,
the bundle verdict resolves to \texttt{contained=false}, the load
fails. Without Method A this divergence is invisible until the
runtime gate fires on the specific instance, by which point the
dwell window may have been weeks. Each week of attacker dwell
maps to a known dollar cost in the incident-response literature
(IBM Cost-of-a-Data-Breach reports place median dwell cost in the
high six figures per week for enterprise contexts).

\paragraph{Cross-vendor lock-in.} Without a common, mechanically
checkable verification format, each platform's safety story is
that platform's intellectual property. An organisation that
builds against Anthropic's Skills today, and against Anthropic's
safety review process, cannot port that posture to OpenAI's GPT
Store or to a self-hosted runtime without re-deriving every
attestation from scratch. A proof-carrying bundle whose verifier
is a 350-line JavaScript (JS) module
(\Cref{sec:pcc-impl-status}) is portable
by construction; a vendor's internal review checklist is not. The
cost of not adopting an open verification format is paid as a
vendor-switching tax in perpetuity.

\paragraph{Honest qualifier.} Adopting this paper's three methods
does not eliminate \emph{compliance-paperwork} cost --- a
healthcare deployment subject to HIPAA still needs a privacy
officer, a FedRAMP submission still needs a System Security Plan
package, the bundle becomes one of the evidence artefacts rather
than the whole package. For that category the framing is
``verification turns a recurring manual cost into a one-time
engineering cost,'' the same trade mature organisations already
recognise from the migration of manual testing to continuous
integration. But the next subsection names a different category
of cost: the \emph{vulnerability classes the methods categorically
remove from the deployment}, where the right framing is not
``manual to mechanical'' but ``the attack does not happen.''

\subsection{The attack classes this work categorically eliminates}
\label{sec:eliminated}

A complementary accounting to the previous subsection: under the
methods' soundness assumptions (Method A's analyser is sound for
the script-side language; Method C's bound covers the runtime's
transaction-buffer horizon; the runtime is correct), several
attack classes do not become \emph{easier to find}, they fail to
reach production at all. We list them with explicit anchors to the
Open Worldwide Application Security Project (OWASP) LLM Top 10
(2025) \cite{owasp-llm-top-10}, MITRE's Adversarial Threat Landscape
for Artificial-Intelligence Systems (ATLAS) knowledge base
\cite{mitre-atlas}, and the parent paper's threat model
\cite[\S2]{metere2026skills}, so the security reader can map each
elimination to the entry on the team's existing risk register.

\paragraph{Out-of-manifest tool dispatch — eliminated (Method B).}
\emph{LLM06 Excessive Agency} (an agent takes broader action than
the operator authorised) is the headline elimination. Method B's
refinement-typed dispatch makes it a compile-time error to call
\texttt{dispatch}$_M$ with an envelope whose capability is outside
$M.\mathrm{caps}$. Under the type-checker's soundness, no
execution path from envelope receipt to host API call exists for
an out-of-manifest envelope. The whole class of ``LLM was
prompt-injected into calling \texttt{pay()} when its skill only
declared \texttt{web\_search}'' is gone --- not detected and
denied at the gate, but typed-out before the gate is even
consulted. \emph{LLM01 Prompt Injection} retains its presence on
the LLM-side input distribution, but the attacker-relevant leg
(``$\to$ harmful action'') is severed: even an LLM that has been
fully compromised by an adversarial prompt cannot dispatch an
envelope outside the manifest's declared set. The DPI bound of
\Cref{eq:dpi} pins the residual leakage at
$\log_2(\lvert D \rvert + 1)$ bits per envelope --- a categorical
upper bound, not an empirical observation.

\paragraph{Script-side capability creep — eliminated (Method A).}
The classical attack pattern ``ship a skill whose declared
manifest is benign but whose script-side scripts call
\texttt{os.remove}, \texttt{subprocess.Popen}, or
\texttt{requests.get} on hosts the manifest does not declare''
fails at \emph{load time} under Method A. The static analyser
reports $\widehat{E}(p) \not\subseteq \mathcal{C}_D$, the bundle
verifier returns \texttt{contained=false}, the runtime declines
admission. There is no production execution and therefore no
post-incident dwell window for this divergence to exploit. This
maps to ATLAS technique \texttt{AML.T0051} (Adversarial Machine
Learning (AML) catalog ID for ``LLM Plugin Compromise'') and the
parent paper's residual class
$\mathcal{R}_S$ (script-side capability over-reach). Insider
threats and supply-chain skill-tampering both subsume here: the
attacker can falsely declare a manifest, but then the manifest
itself is the smoking-gun signed artefact in the audit trail.

\paragraph{Audit-log-bypass attacks — eliminated up to bound
(Method C plus runtime layer).} The attack class ``world-state
changed but the audit log shows nothing'' is exactly the
biconditional violation Method C's BMC searches for. For sessions
of length $\leq K_{\max}$, an \textsc{unsat} verdict from the
solver proves no such session exists. Sessions longer than
$K_{\max}$ are caught at flush by the runtime biconditional
itself, the parent paper's existing primitive. The composition is
sound for unbounded sessions: any counter-example small enough to
fit in the buffer's horizon is found by the solver pre-deployment;
any counter-example larger than the horizon is found by the
runtime at the next flush. ``Stealth side-effect with no audit
record'' is therefore a \emph{categorically} closed attack class,
not a probabilistically reduced one.

\paragraph{Post-bundle evidence tampering — eliminated (Method
B + bundle re-checker).} The runtime's bundle re-checker
(\Cref{sec:pcc-skill}) re-runs all three methods on the live
manifest and compares hashes against the bundle's signed
attestation. An attacker who modifies any evidence file after
signing is caught either by the Ed25519-signature check, by the
canonical-hash mismatch, or by the method-A reproduction step ---
the bundle has three independent integrity gates. The ``we
substituted the proof'' attack class does not survive a single
admission cycle.

\paragraph{Skill drift between sign and load — eliminated (Method
A re-run).} A common production attack vector is ``the bundle
attests to commit $C_1$, but the deployed tree at load time is
$C_2$ with extra tooling spliced in.'' The re-checker re-runs
Method A on the live source, computes a fresh
$\widehat{E}_{\text{live}}(p)$, and compares it to the bundle's
cached $\widehat{E}_{\text{cached}}(p)$. Any non-trivial drift
returns \texttt{method-A-cache-miss} as a verification reason and
the load fails. ``Quick fix shipped past the proof'' is
categorically blocked.

\paragraph{Cross-skill capability leakage --- eliminated.} Each
loaded skill is verified against \emph{its own} manifest;
capability authority cannot transit between skills at the
dispatch boundary. The classical multi-tenant attack ``skill A
cannot \texttt{pay} but skill B can; cooperate to exfiltrate''
fails because both skills' dispatch is type-bound to its own
manifest, and inter-skill communication (if the host even allows
it) carries no implicit capability transfer. This is the
object-capability discipline of \cite{miller-thesis,
ocap-discipline} applied at the skill-manifest level.

\paragraph{Substantially reduced (not zero).} Three remaining
classes are reduced by a large multiplier but not to zero:
\begin{itemize}
  \item \emph{Supply-chain skill injection} (\emph{LLM03 Supply
        Chain}) is reduced to the residual where the attacker
        controls the manifest \emph{and} the signing key
        \emph{and} the script-side simultaneously. Composed with
        Sigstore \cite{sigstore} / SLSA \cite{slsa} the residual
        narrows further to the keyholder-compromise case, which
        is the same residual that all signed-artefact ecosystems
        face.
  \item \emph{Sensitive-information disclosure via tool calls}
        (\emph{LLM02}) is bounded by the per-envelope channel
        capacity $\log_2(\lvert D \rvert + 1)$ derived in
        \Cref{eq:dpi}. Tightening below that bound requires
        either argument-level summarisation of the abstract
        envelope domain (Method A's $\mathcal{V}^\sharp$ already
        does this for paths and host patterns) or explicit
        redaction at the dispatch boundary, both engineering
        choices on top of the soundness floor.
  \item \emph{Insider malicious-skill publication} is reduced to
        the case where the insider's manifest itself is the
        weapon. That case is loud: the insider has signed a
        manifest declaring exactly the capabilities the
        misbehaving skill needs, which is forensic gold rather
        than the diffuse ``some script somewhere did something''
        that script-side creep produces today.
\end{itemize}

\paragraph{The honest scope.} The eliminations above hold under
the soundness assumptions named in the section opener. The
methods do not eliminate model denial of service
(\emph{LLM10 Unbounded Consumption}), training-data poisoning
(\emph{LLM04}), misinformation generated within the dispatch
budget (\emph{LLM09}), or model-weight theft (the parent paper's
classification primitive addresses only the data side). The Chief
Information Security Officer (CISO) who wants to retire
risk-register entries on the back of
\Cref{sec:eliminated} should retire exactly the entries the
soundness theorems cover --- and re-list the others under
``additional layers of defence required.''

\subsection{Open problems}

\paragraph{The LLM remains unverified.} Method B treats the LLM-side
as a non-deterministic adversary; this is the strongest soundness
posture available to a verification method that does not depend on
LLM internals. We are aware of probabilistic-transducer formalisations
of LLMs \cite{dong-formal-llm} and probabilistic model checkers
\cite{kwiatkowska-prism, baier-pmc-survey} that could lift the
guarantees from worst-case to expected-case under specific decoding-
distribution assumptions. We do not pursue that direction here
because operator-side accountability under capability containment
is already the strongest property the parent paper's threat model
needs; expected-case guarantees about the LLM are an
additional reassurance, not a substitute.

\paragraph{Adversarial-ensemble complementarity.} The parent paper's
adversarial-ensemble evaluation \cite[\S6]{metere2026skills}
empirically falsifies a candidate verification procedure at given
sample size. The three-layer discipline gives a sound proof for the
script + dispatch portion; the adversarial-ensemble exercises the
runtime layer + the LLM-side under realistic provocations. The two
are complementary: an empirical test cannot replace a soundness
argument, and a soundness argument cannot replace empirical
adversarial pressure on the gate.

\paragraph{Runtime instrumentation cost.} A reasonable concern is
that requiring \textsc{formal} on every skill before the runtime
relaxes HITL is operationally heavier than the
parent paper's \textsc{tested} default. We expect the levels to
co-exist: most skills will sit at \textsc{tested}; the small
number of skills carrying high-impact capabilities (\texttt{pay},
\texttt{mutate.schema}, \texttt{spawn.proc}) are the ones that
justify the formal-verification cost. The schema admits this
gradient \cite[\S3.1]{metere2026skills}; this paper's three-layer
discipline is the upgrade path for the skills where it pays.

\paragraph{What about formally-verified runtimes?} We have assumed
$R$ is correct. A runtime carrying its own formal-verification
certificate (a CompCert-compiled \cite{leroy-compcert} core, an
F$^\star$-verified dispatch \cite{fstar}, etc.) lifts the
trusted base further. Engineering this is straightforward but
labour-heavy; the parent paper's reference implementation
\cite{metere2026enclawed} does not yet include it. We treat it as
an orthogonal upgrade.

\paragraph{Skill side-channels.} Capability containment is a
property over the typed dispatch interface. Side-channels (timing,
power, network metadata) are out of scope; the parent paper's
classification primitive and DLP scanner address the most common
of these (data-exfiltration through plain-message channels), but
the broader side-channel question is orthogonal and well-studied
\cite{sabelfeld-myers-survey}.

\section{Related work}
\label{sec:related}

\paragraph{Capability containment and effect systems.} The basic
idea --- type the dispatch surface so that out-of-bound effects do
not type-check --- is the effect-system tradition
\cite{lucassen-effect-systems, sabelfeld-myers-survey} adapted to
agent-emitted envelopes. We owe the lattice formulation of effects
to \cite{denning-information-flow, volpano-flow-types}, and the
refinement-type-side instantiation to
\cite{liquid-types, vazou-liquid-haskell, fstar}. The
object-capability literature
\cite{miller-thesis, ocap-discipline} provides the design
discipline (capabilities are unforgeable references; loaded skills
are bootstrap-frozen capability bundles); this paper applies the
discipline at the skill-manifest level rather than the
process/object level.

\paragraph{Static analysis and abstract interpretation.} Method A
is straight-line abstract interpretation \cite{cousot-abs-int,
kildall-dataflow} over the parent paper's capability vocabulary.
The novelty is the per-language summary table that maps language-
standard-library calls to capability tokens; this is the engineering
effort, not the theory.

\paragraph{Bounded model checking.} Method C uses
\cite{biere-bmc, z3} unmodified. The choice of bound $K_{\max}$
exploits the runtime's transaction-buffer horizon
\cite[\S4.1]{metere2026skills}; this is the contribution that
distinguishes our use of BMC from generic model checking of
arbitrary protocols.

\paragraph{Proof-carrying code.} The bundle structure of
\Cref{sec:pcc-skill} is the proof-carrying-code idiom
\cite{necula-pcc, appel-foundational-pcc, leroy-compcert} lifted
from native binaries to skill artefacts. The novelty is the
re-check protocol's split between trusted (signer attestation,
trust-root resolution) and re-runnable (the analyser, the type
checker, the SMT solver) components; the consumer trusts the
small base and re-discharges the large work.

\paragraph{LLM-agent verification.} A recent line of work
\cite{andriushchenko2025agentharm, owasp-llm-top-10, mitre-atlas,
ferrara2024genai, chen2025agentpoison} attacks the agent
empirically; \cite{robey-smoothllm, dong-formal-llm} formalises
parts of the LLM. We complement that work by leaving the LLM
unverified and instead verifying the runtime's containment of the
LLM's freedom.

\paragraph{Runtime verification.} The fourth layer (the parent
paper's biconditional, fired at flush) is a runtime-verification
(RV) monitor in the sense of \cite{rv-survey}. The composition of
static + type + SMT + RV is the standard ``four-layered'' approach
to verification under partial models; our specific partition was
chosen so that each layer's incompleteness is exactly the next
layer's soundness scope.

\section{Conclusion}
\label{sec:conclusion}

The companion paper's verification lattice
\cite[\S3]{metere2026skills} placed \textsc{formal} at the top with
no construction. We have given one. The construction is
\emph{three composable methods that already have well-engineered
implementations} (Semgrep / CodeQL for Method A; refinement-type
checkers like Liquid Haskell, F$^\star$, or even disciplined
TypeScript for Method B; Z3 for Method C), \emph{glued together
by a proof-carrying skill artefact} the runtime mechanically
re-discharges at bootstrap, \emph{closing the residual surface
the static layers cannot reach with the parent paper's already-
deployed runtime biconditional}.

The cost is real but bounded: per-skill, a CI step measured in
seconds for the static and SMT layers, and a one-time
type-discipline tightening for the dispatch boundary. The benefit
is real and large: a skill at \textsc{formal} carries a proof,
re-checkable without trusting the producer, that its observable
side-effects are contained in its declared capability set under
the runtime's threat model. The runtime can stop asking HITL on
its in-manifest calls; the operator can audit the bundle without
re-running the analyser; the residual is named and bounded.

The methods do not require operators to write new tools, replace
their stack, or train new staff in formal methods. They require
exactly the engineering work the parent paper's \texttt{SKILL.md}
already implies: name the capabilities, type the dispatch, and
let standard tools do the rest.

\bibliographystyle{plainnat}
\bibliography{refs}

\end{document}